\documentclass{article}
\usepackage{arxiv}

\usepackage[T1]{fontenc}
\usepackage[utf8]{inputenc}
\usepackage{amsmath,amssymb,amsthm}
\usepackage{booktabs}
\usepackage{graphicx}
\usepackage[hyphens]{url}
\usepackage{natbib}
\usepackage{hyperref}

\theoremstyle{plain}
\newtheorem{lemma}{Lemma}
\newtheorem{proposition}{Proposition}
\newtheorem{corollary}{Corollary}
\theoremstyle{definition}
\newtheorem{assumption}{Assumption}
\theoremstyle{remark}
\newtheorem{remark}{Remark}

\theoremstyle{plain}
\newtheorem{lemmaS}{Lemma}
\newtheorem{propositionS}{Proposition}
\newtheorem{corollaryS}{Corollary}
\theoremstyle{definition}
\newtheorem{assumptionS}{Assumption}

\newcommand{\R}{\mathbb{R}}
\newcommand{\E}{\mathbb{E}}
\newcommand{\Wq}{W_Q}
\newcommand{\Wk}{W_K}
\newcommand{\mQ}{m_Q}
\newcommand{\mK}{m_K}
\newcommand{\Rot}{\mathrm{Rot}}

\title{Relative Positions Generalize, Absolute Positions Memorize:\\
An Implicit-Bias Account of Length Generalization in Attention}

\author{
  Subham Singh\textsuperscript{\rm 1}, Ashutosh Mishra\textsuperscript{\rm 2}, Subha Raut\textsuperscript{\rm 1} \\[4pt]
  \normalfont\textsuperscript{\rm 1}Department of Mathematics and Statistics, Mississippi State University \\
  \normalfont\textsuperscript{\rm 2}Independent Researcher \\
  \normalfont ss4707@msstate.edu, ashutoshm1771@gmail.com, sr2120@msstate.edu
}
\date{}

\begin{document}
\maketitle

\begin{abstract}
Transformers with relative positional encodings often extrapolate to sequences
longer than those seen during training, whereas transformers with learned
absolute encodings typically do not. This is a robust empirical regularity, and
the explanations offered for it so far are chiefly about \emph{expressivity},
that is, about whether a length-generalizing solution exists. We give an
\emph{optimization} explanation. On a minimal fixed-offset retrieval task that
isolates positional selection, the gap is governed by the implicit bias of the
trained attention head: among the many solutions that fit short sequences,
which one gradient descent actually selects. We prove that rotary encodings
make the attention logit a function of relative offset alone, an exact
equivariance, so whatever selection rule is learned at training lengths is
reproduced verbatim at every longer length. Learned absolute encodings instead
leave out-of-range positions unconstrained, and the trained head pins to a
fixed absolute position inside the training range. We characterize the learned
rotary rule as a low-rank ``carrier'' kernel aligned with the target offset,
and we derive the resulting graceful accuracy decay as an attention-dilution
law; both predictions are confirmed across seeds and offsets. A
linear-attention control shows the mechanism is specific to softmax: without
normalization, training selects a min-norm interpolant that does not
extrapolate. The phenomenon, the equivariance, and the carrier all transfer
to a multi-layer, multi-head transformer trained on a full-sequence
length-generalization task. The account connects the implicit bias of
attention, implicit bias for extrapolation in recurrent models, and the
learning side of the RASP-L conjecture.
\end{abstract}

\section{Introduction}
Length generalization, the ability to train on short sequences and succeed on
longer ones, is a basic requirement for sequence models and a longstanding
puzzle \citep{anil2022exploring,deletang2023chomsky}. One finding recurs across
tasks and architectures: the \emph{positional encoding} matters, and relative
schemes extrapolate far better than learned absolute encodings
\citep{press2022alibi,kazemnejad2023nope,jelassi2023arithmetic}.
The prevailing explanations are about \emph{expressivity}. The RASP-L
conjecture \citep{zhou2023rasp} and the formal-language framework of
\citet{huang2025formal} characterize whether a length-generalizing solution
\emph{exists}, or whether it is identifiable from the training distribution.
Existence, however, is not selection. Among the many parameter settings that
fit the training lengths, gradient descent picks one, and whether that one
generalizes is an optimization question this literature leaves open. Indeed,
the RASP-L conjecture is itself implicitly a claim about what training
\emph{learns}, supported empirically rather than explained.

We address this optimization question on a task small enough to analyze
exactly and run exhaustively, studying the implicit bias of a single trained
attention head: which solution training selects, and how the positional
encoding shapes that selection.

\paragraph{The phenomenon (Section~\ref{sec:phenomenon}).}
Consider a fixed-offset retrieval task: attend to the token a fixed number of
positions before the query and copy it. A one-layer head with rotary encoding
(RoPE) keeps near-perfect accuracy well beyond the training-length range. The
same head with a learned absolute encoding (APE) collapses to near-chance
accuracy the moment it encounters an unseen length.

\paragraph{Contributions.}
\begin{enumerate}
\item \textbf{The phenomenon at minimal scale.} We isolate the
relative-versus-absolute gap in a one-layer, single-head model on a task with
a single optimal token, the regime in which the attention-SVM
characterization of implicit bias \citep{tarzanagh2023svm} applies
(Sections~\ref{sec:setup}--\ref{sec:phenomenon}).
\item \textbf{Transfer is exact equivariance (Lemma~\ref{lem:equiv}).} The
RoPE logit depends on positions only through their relative offset, so the
selected offset is length-invariant. Empirically the selected offset equals
the target in $100\%$ of seed$\times$length cells.
\item \textbf{The selected rule is a target-aligned carrier
(Lemma~\ref{lem:select}), and accuracy decays by dilution
(Corollary~\ref{cor:decay}).} Trained query and key representations share a
dominant direction, pre-rotated to the target offset, which yields a relative
kernel peaked at the target; softmax over a growing context then forces an
inverse-linear decay of the target's attention weight. Both effects are
confirmed across seeds and offsets $K\in\{2,3\}$. Removing softmax
normalization removes the mechanism: trained linear attention solves the
task in distribution by min-norm interpolation and does not extrapolate
(Section~\ref{sec:disc}).
\item \textbf{Absolute encodings are position-pinned
(Proposition~\ref{prop:ape}).} As a negative control, we show the trained APE
head selects a fixed absolute position inside the training range, so it misses
the target outside that range.
\item \textbf{The mechanism is not a one-layer artifact
(Section~\ref{sec:real}).} In a two-layer, four-head transformer with MLP and
LayerNorm trained on a full-sequence shift task, all three findings
transfer: RoPE generalizes while APE degrades out of range, the RoPE head remains
offset-equivariant out of distribution while the APE head pins, and it
carries a strong target carrier ($0.98$ versus a $0.02$ baseline).
\end{enumerate}

\section{Setup}
\label{sec:setup}
\paragraph{Task.} The vocabulary is $V=\{1,\dots,m\}$, and a length-$L$
sequence $s_0,\dots,s_{L-1}$ consists of i.i.d.\ uniform tokens. The query is
the last position, and the target is the token a fixed offset $K$ earlier:
$y=s_{L-1-K}$. Training lengths are drawn from $[L_{\min},L_{\max}]$;
evaluation uses lengths $L'>L_{\max}$. The task therefore isolates a purely
positional selection problem with a single optimal token.

\paragraph{Model.} We use one layer and a single head, with the query taken
from the last position and no MLP. With token embeddings $x_t=E(s_t)\in\R^d$
and projections $\Wq,\Wk,W_V$, the logit between the query position $n=L-1$
and key position $t$ is $\ell_t=q^\top k_t/\sqrt d$. We compare \textbf{APE},
which adds learned position vectors ($z_t=x_t+p_t$, $q=\Wq z_n$,
$k_t=\Wk z_t$), with \textbf{RoPE}, which applies block-diagonal rotations
$R_t$ built from $2\times2$ blocks $\Rot(t\theta_i)$ on each coordinate pair
$i$ ($q=R_n\Wq x_n$, $k_t=R_t\Wk x_t$). The output is
$o=\sum_t \mathrm{softmax}(\ell)_t v_t$ with prediction $\hat y=Uo$.

\paragraph{Background.} \citet{tarzanagh2023svm} show that, under strictly
decreasing losses with vanishing regularization, the key--query
product converges in direction to a max-margin solution that separates the
optimal token from the rest. We study the \emph{positional structure} of that
selected solution.

\section{The Phenomenon: Relative Generalizes, Absolute Collapses}
\label{sec:phenomenon}
\begin{figure}[t]\centering
\includegraphics[width=\linewidth]{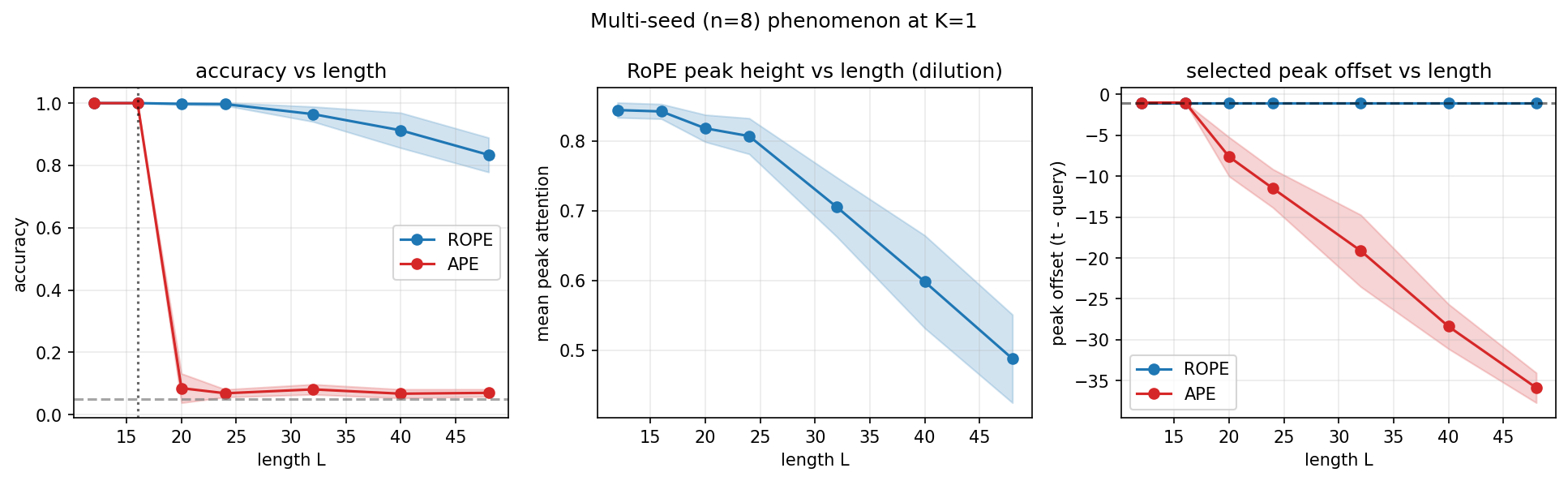}
\caption{Multi-seed ($n{=}8$) results at $K{=}1$. \textbf{Left:} accuracy
versus length; RoPE stays high past the training range (dotted line) while APE
drops to near chance (dashed line). \textbf{Middle:} the RoPE peak attention
height decays gracefully. \textbf{Right:} the selected peak offset is flat at
$-K$ for RoPE and diverges for APE.}
\label{fig:phenomenon}
\end{figure}
Figure~\ref{fig:phenomenon} and Table~\ref{tab:acc} report the separation.
RoPE accuracy moves from $1.00$ to $0.83$ as $L$ grows from $12$ to $48$,
whereas APE is at $1.00$ in range and near $0.07$ (chance is $0.05$) out of
range. The same separation holds at $K=2$ and $K=3$ (Section~\ref{sec:exp}).
The rest of the paper explains the success, its graceful erosion, and the
failure.

\begin{table}[t]\centering
\caption{Accuracy by length ($K{=}1$, mean over $8$ seeds). Lengths
$L\le16$ are in distribution. Chance is $0.05$.}
\label{tab:acc}
\small
\setlength{\tabcolsep}{3.6pt}
\begin{tabular}{lccccccc}
\toprule
$L$ & 12 & 16 & 20 & 24 & 32 & 40 & 48\\
\midrule
RoPE & $1.00$ & $1.00$ & $1.00$ & $1.00$ & $0.96$ & $0.91$ & $0.83$\\
APE & $1.00$ & $1.00$ & $0.09$ & $0.07$ & $0.08$ & $0.07$ & $0.07$\\
\bottomrule
\end{tabular}
\end{table}

\section{An Implicit-Bias Analysis}
\subsection{Rotary Attention Is Offset-Equivariant}
\begin{lemma}[Equivariance]\label{lem:equiv}
For any $\Wq,\Wk$, the RoPE logit between a query at position $n$ and a key at
position $t$ depends on $(n,t)$ only through the relative offset $\delta=t-n$:
$\sqrt d\,\ell_t = x_n^\top \Wq^\top R_{\delta}\Wk\, x_t =: \Phi(\delta;x_n,x_t).$
\end{lemma}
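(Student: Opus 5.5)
The plan is to unfold the definitions of the RoPE query and key and reduce everything to two elementary facts about planar rotations: $\Rot(\alpha)^\top=\Rot(-\alpha)$ and $\Rot(\alpha)\Rot(\beta)=\Rot(\alpha+\beta)$. Since $R_t$ is block-diagonal with blocks $\Rot(t\theta_i)$, the same frequencies $\theta_i$ are used at every position. Applying the two facts block by block therefore gives the matrix identities
\[
R_n^\top = R_{-n}, \qquad R_{-n}R_t = R_{t-n},
\]
where $R_s$ denotes the block-diagonal matrix with blocks $\Rot(s\theta_i)$ for any integer $s$, possibly negative. Products of block-diagonal matrices with matching block structure are computed blockwise, so the composition law lifts from each $2\times2$ block to the full $d\times d$ matrix.

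With these identities in hand, I would substitute $q=R_n\Wq x_n$ and $k_t=R_t\Wk x_t$ into the definition $\sqrt d\,\ell_t=q^\top k_t$. This gives
\[
\sqrt d\,\ell_t = x_n^\top \Wq^\top R_n^\top R_t \Wk x_t = x_n^\top \Wq^\top R_{t-n}\Wk x_t.
\]
This is exactly the claimed expression with $\delta=t-n$. The only dependence on $(n,t)$ is through the single matrix $R_\delta$; the remaining factors $\Wq,\Wk,x_n,x_t$ carry no positional index, since RoPE adds no position vectors to the embeddings. Hence the logit equals a fixed function $\Phi(\delta;x_n,x_t)$. No assumption on $\Wq$ or $\Wk$ is used, which gives the ``for any $\Wq,\Wk$'' clause.

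There is essentially no obstacle here. The one point needing care is making the extension of $R_s$ to negative $s$ explicit, so that $R_\delta$ is well defined when $\delta<0$ (keys precede the query). I would also remark that the identity holds for every pair of integer positions, including those beyond $L_{\max}$. This is what the later length-transfer argument relies on: the function $\Phi$ learned on training lengths is literally the same function evaluated at longer lengths.
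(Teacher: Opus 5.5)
Your proof is correct and is essentially the paper's own argument. The paper likewise substitutes $q=R_n\Wq x_n$ and $k_t=R_t\Wk x_t$, then uses the blockwise identity $\Rot(\alpha)^\top\Rot(\beta)=\Rot(\beta-\alpha)$ to get $R_n^\top R_t=R_{t-n}$; you split that identity into the transpose rule and the composition rule, and your explicit extension of $R_s$ to negative $s$ adds a small bit of care the paper leaves implicit.
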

\begin{proof}
$R_t$ is block-diagonal with blocks $\Rot(t\theta_i)$. Since
$\Rot(\alpha)^\top\Rot(\beta)=\Rot(\beta-\alpha)$ blockwise, we have
$R_n^\top R_t=R_{t-n}$, and therefore $\sqrt d\,\ell_t=(R_n\Wq x_n)^\top(R_t\Wk x_t)
=x_n^\top\Wq^\top R_n^\top R_t\Wk x_t=x_n^\top\Wq^\top R_{t-n}\Wk x_t$.
\end{proof}
The relative dependence in Lemma~\ref{lem:equiv} is the design property of
rotary encodings \citep{su2021roformer}; we state it because the
length-invariance it yields (Corollary~\ref{cor:transfer}) grounds the account.
\begin{corollary}[Rule transfer]\label{cor:transfer}
The score profile $\Phi$ carries no dependence on sequence length, so the
offset that maximizes it is the same at every length at which that offset is
available, provided no newly available offset attains a larger value. Under
Assumption~\ref{ass:carrier} below, the expected profile is globally maximized
at $\delta=-K$ (Lemma~\ref{lem:select}), so for the fixed-offset task the
selected offset is $-K$ independent of $L$.
\end{corollary}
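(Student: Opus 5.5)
The argument is a direct consequence of Lemma~\ref{lem:equiv}, plus a short bookkeeping step for the offset domain. First I would fix a length $L$ and write the query position as $n=L-1$. By Lemma~\ref{lem:equiv}, each logit is $\sqrt d\,\ell_t=\Phi(t-n;x_n,x_t)$. The function $\Phi(\delta;x,x')=x^\top\Wq^\top R_\delta\Wk x'$ is built only from the trained weights and the rotation angles $\delta\theta_i$, so $L$ does not enter it. The only thing that changes with $L$ is the set of available offsets $D_L=\{-(L-1),\dots,0\}$, and these sets are nested: $D_L\subseteq D_{L'}$ whenever $L\le L'$. This gives the first sentence of the corollary.

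For the selection statement, I would work with the expected profile $\bar\Phi(\delta)=\E_{x_n,x_t}\Phi(\delta;x_n,x_t)$. Tokens are i.i.d.\ uniform, so for $\delta\neq0$ this expectation does not depend on $t$ or $L$. Let $\delta^\star(L)=\arg\max_{\delta\in D_L}\bar\Phi(\delta)$. For $L\le L'$, I would take $\delta^\star(L)$ and $\delta^\star(L')$ and argue as follows. Both sets contain the old offsets, and the values at those offsets agree. So $\delta^\star(L')$ either equals $\delta^\star(L)$ or lies in $D_{L'}\setminus D_L$. The second case is exactly the excluded event "a newly available offset attains a larger value." The same argument covers the pointwise (per-sample) argmax, because for fixed token values the profile is again identical across lengths. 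This proves the conditional invariance.

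For the final claim, I would invoke Lemma~\ref{lem:select}: under Assumption~\ref{ass:carrier}, $\bar\Phi$ is maximized at $\delta=-K$ over all integers $\delta\le0$, not just over $D_{L_{\max}}$. A global maximizer on $\mathbb{Z}_{\le0}$ is also the maximizer on every $D_L$ with $L\ge K+1$, which is exactly when $-K$ is available. So the proviso holds automatically, and $\delta^\star(L)=-K$ for all such $L$.

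The main obstacle is the gap between a global statement and a training-range statement. The corollary needs Lemma~\ref{lem:select} to supply a maximum over all nonpositive offsets. Near-periodicity of $\cos(\delta\theta_i)$ could, in principle, make $\bar\Phi$ peak again at some large $|\delta|$, and training data never constrains offsets beyond $L_{\max}-1$. If Lemma~\ref{lem:select} only gives in-range optimality, I would first try to check that the carrier assumption bounds $\bar\Phi(\delta)$ for $\delta\neq-K$ uniformly, for example through a strict margin on the aligned carrier direction. Failing that, I would state the transfer only up to the first length at which some offset outside the training range overtakes $-K$.
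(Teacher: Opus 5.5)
Your argument is correct and matches the paper's. The paper gives no separate proof and treats the corollary as immediate from Lemma~\ref{lem:equiv} (a length-free profile on nested offset sets) together with Lemma~\ref{lem:select}, whose bound $g(s)\le g(0)=1$ holds for every integer $s$, so the proviso is automatic and your fallback about in-range-only optimality is never needed. One small correction: Lemma~\ref{lem:select} covers only $\delta\le-1$, not $\delta\le0$, because the diagonal logit carries the extra covariance term $\E[\xi_n'^\top\xi_n]$ of Remark~\ref{rem:residual}, which Assumption~\ref{ass:carrier} does not control; transfer is unaffected since $\delta=0$ is available at every length, but the final claim that the selected offset is $-K$ (yours and the paper's) glosses over this point.
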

Empirically (Fig.~\ref{fig:phenomenon}, right; Section~\ref{sec:exp}) the
selected offset equals $-K$ in $100\%$ of seed$\times$length cells with zero
variance, as expected for a structural identity. The invariance requires no
assumption about the data or training and is what makes length
generalization possible at all.

\subsection{The Selected Solution Is a Target-Aligned Carrier}
\begin{assumption}[Shared carrier]\label{ass:carrier}
There exist a unit vector $u$ and a scalar $\rho>0$ such that
$\Wk x_t=\rho u+\xi_t$ and $\Wq x_n=\rho R_{-K}u+\xi'_n$, where the content
residuals satisfy $\E[\xi_t]=\E[\xi'_n]=0$ and $\xi_t\perp u$.
\end{assumption}
\begin{lemma}[Target-aligned selection]\label{lem:select}
Under Assumption~\ref{ass:carrier}, the content-averaged score profile over
keys $t\ne n$ (offsets $\delta\le-1$) is
\begin{align*}
\phi(\delta)&:=\E\!\left[\sqrt d\,\ell_t\mid \delta\right]
=\mQ^\top R_\delta \mK=\rho^2\,g(\delta+K),\\
g(s)&=\sum_{i=1}^{d/2} p_i\cos(s\theta_i),
\end{align*}
with $\mK=\rho u$, $\mQ=\rho R_{-K}u$, and pair energies
$p_i=u_{2i-1}^2+u_{2i}^2\ge0$, $\sum_i p_i=1$. Then $g(s)\le g(0)=1$, so
$\phi$ is maximized at $\delta=-K$. If, moreover, no nonzero integer $s$ in
the evaluated offset range satisfies $s\theta_i\in2\pi\mathbb{Z}$ for all $i$
with $p_i>0$, the maximizer is unique.
\end{lemma}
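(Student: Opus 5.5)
Starting from Lemma~\ref{lem:equiv}, write $\sqrt d\,\ell_t=(\Wq x_n)^\top R_\delta(\Wk x_t)$ and substitute the decomposition of Assumption~\ref{ass:carrier}:
\[
(\rho R_{-K}u+\xi'_n)^\top R_\delta(\rho u+\xi_t)
= \rho^2 u^\top R_{-K}^\top R_\delta u + \rho\,u^\top R_{-K}^\top R_\delta\xi_t + \rho\,\xi_n'^\top R_\delta u + \xi_n'^\top R_\delta \xi_t .
\]
Take the expectation conditional on $\delta$. Since $t\ne n$ for $\delta\le-1$, the key and query tokens are independent i.i.d.\ draws, and so are their residuals. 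Hence the two cross terms vanish because $\E[\xi_t]=\E[\xi'_n]=0$. The quadratic term factorizes as $\E[\xi'_n]^\top R_\delta\E[\xi_t]=0$. This independence is the one point I would state explicitly, since it is exactly why the lemma is restricted to $t\ne n$. What remains is $\mQ^\top R_\delta\mK$ with $\mQ=\rho R_{-K}u$ and $\mK=\rho u$. The orthogonality $\xi_t\perp u$ is not needed for the mean profile; I would note that it only matters for variance statements later.

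Next I would use the group property from the proof of Lemma~\ref{lem:equiv}. Blockwise, $R_{-K}^\top=R_K$ and $R_KR_\delta=R_{\delta+K}$, so $\phi(\delta)=\rho^2\,u^\top R_{\delta+K}u$. The quadratic form decomposes over the $2\times2$ blocks. For a block with angle $\alpha$ and subvector $w=(u_{2i-1},u_{2i})$, we have $w^\top\Rot(\alpha)w=\|w\|^2\cos\alpha$, because the antisymmetric sine part cancels. Summing over blocks gives $\rho^2\sum_i p_i\cos((\delta+K)\theta_i)=\rho^2 g(\delta+K)$. Each $p_i\ge0$ trivially, and $\sum_ip_i=\|u\|^2=1$.

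For the maximization, $\cos\le1$ and $p_i\ge0$ give $g(s)\le\sum_ip_i=1=g(0)$. Since $\rho^2>0$, $\phi$ attains its maximum $\rho^2$ at $s=0$, i.e.\ $\delta=-K$, and $-K$ lies in the range $\delta\le-1$ because $K\ge1$. For uniqueness, suppose $g(s)=1$ for some integer $s\neq 0$. Then $\sum_ip_i(1-\cos(s\theta_i))=0$ is a sum of nonnegative terms, so $\cos(s\theta_i)=1$, i.e.\ $s\theta_i\in2\pi\mathbb Z$, for every $i$ with $p_i>0$. The hypothesis excludes this for $s=\delta+K$ with $\delta$ in the evaluated range. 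So every other offset has $\phi(\delta)<\rho^2$, and since the range is finite there is a strict gap.

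I expect no real obstacle. The only delicate step is the vanishing of the residual–residual term, which depends on independence of distinct positions rather than on anything in the assumption itself. If the residuals were instead allowed to be correlated across positions, I would add $\E[\xi_n'^\top R_\delta\xi_t]=0$ as an explicit hypothesis.
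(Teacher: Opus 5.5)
Your proposal is correct and follows the paper's proof essentially step for step. Independence of distinct positions plus mean-zero residuals reduces the expectation to $\mQ^\top R_\delta\mK$; the paper writes this as $\E[a^\top M b]=\E[a]^\top M\,\E[b]$ rather than expanding the four terms. The argument then uses $R_{-K}^\top R_\delta=R_{\delta+K}$, the blockwise identity $w^\top\Rot(\alpha)w=\|w\|^2\cos\alpha$, and $\cos\le 1$, with equality forcing $s\theta_i\in2\pi\mathbb{Z}$ on every pair with $p_i>0$. Your side remarks are accurate and slightly more explicit than the paper: $\xi_t\perp u$ is unused for the mean profile, and $\rho^2>0$ and $K\ge1$ are needed to place the maximizer at $\delta=-K$ within $\delta\le-1$.
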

\begin{proof}
For $t\ne n$ the tokens $x_n$ and $x_t$ are independent, so
$\E[\sqrt d\,\ell_t]=\E[\Wq x_n]^\top R_\delta\,\E[\Wk x_t]
=\mQ^\top R_\delta\mK$, using that the residuals are mean-zero. Hence
$\phi(\delta)=\rho^2(R_{-K}u)^\top R_\delta u=\rho^2 u^\top R_K R_\delta u
=\rho^2 u^\top R_{\delta+K}u$, where $R_{-K}^\top=R_K$ and
$R_KR_\delta=R_{\delta+K}$. By the block structure,
$u^\top R_s u=\sum_i (u_{2i-1},u_{2i})\Rot(s\theta_i)(u_{2i-1},u_{2i})^\top
=\sum_i (u_{2i-1}^2+u_{2i}^2)\cos(s\theta_i)=g(s)$. Since
$\sum_i p_i=\|u\|^2=1$ and $\cos\le1$, we get $g(s)\le1$ with equality iff
$\cos(s\theta_i)=1$ for every $i$ with $p_i>0$, that is, iff
$s\theta_i\in2\pi\mathbb{Z}$ for all such $i$. Setting $s=\delta+K$ completes
the proof.
\end{proof}
Lemma~\ref{lem:select} shows the carrier is \emph{sufficient} to select the
target. A converse holds: only the carrier geometry places the profile's
largest attainable value on the target, which turns
Assumption~\ref{ass:carrier} from a convenience into a structural fact.
\begin{proposition}[Necessity of the carrier]\label{prop:necessity}
For any $\Wq,\Wk$, the content-averaged logit equals the carrier profile,
\[
\E[\sqrt d\,\ell_t\mid\delta]=\mQ^\top R_\delta\mK=\textstyle\sum_i r_i\cos(\delta\theta_i-\psi_i),
\]
with $\mQ=\E[\Wq x_n]$, $\mK=\E[\Wk x_t]$, and $r_i\ge0,\psi_i$ the energy and
phase gap of the $i$th rotary pair of $(\mQ,\mK)$; the content-dependent
remainder is mean-zero. This profile attains its global maximum $\sum_i r_i$ at
the target offset $\delta=-K$ if and only if $\psi_i=-K\theta_i$ for every pair
with $r_i>0$, that is, iff $\mQ$ and $R_{-K}\mK$ are pairwise phase-aligned.
Any misalignment leaves the target strictly below $\sum_i r_i$ and forfeits
Lemma~\ref{lem:select}'s length-uniform guarantee that no competing offset
overtakes it; in this sense the geometry of Assumption~\ref{ass:carrier} is
forced, not a modeling convenience (proof in the supplement).
\end{proposition}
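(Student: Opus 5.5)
The plan has three parts: a decomposition, a per-pair computation, and an extremal argument.

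\textbf{Decomposition.} Fix $\delta\le-1$, so $t\ne n$. Write $\Wq x_n=\mQ+\eta'_n$ and $\Wk x_t=\mK+\eta_t$, where $\mQ=\E[\Wq x_n]$, $\mK=\E[\Wk x_t]$, and the remainders have mean zero. By Lemma~\ref{lem:equiv},
\[
\sqrt d\,\ell_t=\mQ^\top R_\delta\mK+\mQ^\top R_\delta\eta_t+\eta_n'^\top R_\delta\mK+\eta_n'^\top R_\delta\eta_t .
\]
The two cross terms are linear in a mean-zero vector, so each has mean zero. The last term has mean $\E[\eta'_n]^\top R_\delta\E[\eta_t]=0$, because $x_n$ and $x_t$ are independent when $t\ne n$ (i.i.d.\ tokens). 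This gives the content-averaged identity, and shows the remainder is mean-zero. Unlike Lemma~\ref{lem:select}, no carrier assumption is used; only linearity and independence.

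\textbf{Per-pair form.} Split $\mQ,\mK$ into rotary pairs $a_i,b_i\in\R^2$. Identify $\R^2$ with $\mathbb{C}$, so that $a_i=|a_i|e^{\mathrm{i}\alpha_i}$ and $b_i=|b_i|e^{\mathrm{i}\beta_i}$. Then
\[
a_i^\top\Rot(\delta\theta_i)b_i=|a_i||b_i|\cos(\beta_i+\delta\theta_i-\alpha_i).
\]
Set $r_i=|a_i||b_i|$ and $\psi_i=\alpha_i-\beta_i$, taking the phase gap modulo $2\pi$. When $r_i=0$ the phase is arbitrary and the term vanishes. Summing over pairs gives $\sum_i r_i\cos(\delta\theta_i-\psi_i)$. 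I would fix the sign convention for $\psi_i$ here so that it matches the stated alignment condition.

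\textbf{Extremal characterization.} Since $\cos\le1$ and $r_i\ge0$, the profile is at most $\sum_i r_i$. Equality at $\delta=-K$ holds iff $\cos(-K\theta_i-\psi_i)=1$ for every $i$ with $r_i>0$, that is, iff $\psi_i\equiv-K\theta_i \pmod{2\pi}$.

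To restate this as phase alignment, note that $R_{-K}\mK$ has pair phases $\beta_i-K\theta_i$. Matching $\alpha_i$ for all active pairs is exactly $\psi_i\equiv-K\theta_i$. Equivalently, each pair of $\mQ$ is a nonnegative multiple of the corresponding pair of $R_{-K}\mK$. This is the blockwise Cauchy--Schwarz equality case, and it is precisely the structure of Assumption~\ref{ass:carrier} up to per-pair rescaling.

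If alignment fails for some active pair, that cosine is strictly below $1$, so the target value is strictly below $\sum_i r_i$. Hence the target is not a global maximizer of the bounding envelope. The final clause, that Lemma~\ref{lem:select}'s guarantee is forfeited, I would argue as follows. The bound $\sum_i r_i$ is the supremum of the profile over integer offsets whenever the $\theta_i$ are rationally independent of $\pi$, by Kronecker. So some sufficiently distant offset gets arbitrarily close to $\sum_i r_i$ and exceeds the target value. As $L$ grows, such an offset eventually becomes available.

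\textbf{Main obstacle.} The main obstacle is this last step. It needs a density or approximate-recurrence argument (Kronecker/Dirichlet), and a nondegeneracy condition on the frequencies. Without that condition the supremum over integers might not approach $\sum_i r_i$. Under a degenerate commensurate spectrum, I would state it as the weaker claim that no uniform guarantee is available.
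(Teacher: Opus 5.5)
Your decomposition, per-pair polar form, and extremal step follow the paper's proof essentially line for line. Like the paper, you split off mean-zero remainders and use independence only for the product term. You then use $a_i^\top\Rot(\gamma)b_i=\|a_i\|\|b_i\|\cos(\angle a_i-\angle b_i-\gamma)$ and the termwise bound $\cos\le1$, with its equality case at $\delta=-K$. Your observation that alignment fixes $\mQ$ relative to $R_{-K}\mK$ only up to a positive scale on each active pair is, if anything, more precise than the paper's identification of alignment with Assumption~\ref{ass:carrier}.

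You go beyond the paper only in the final clause, and there your condition needs tightening. The paper reads ``forfeits the guarantee'' only in the weak sense: the target falls strictly below $\sum_i r_i$, so the certificate behind Lemma~\ref{lem:select} no longer applies. It never claims that a competitor actually overtakes; its supplementary statement replaces the clause with uniqueness under incommensurability.

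Your Kronecker route needs a \emph{joint} condition: $2\pi$ and the active $\theta_i$ must be linearly independent over $\mathbb{Q}$. Irrationality of each $\theta_i/\pi$ separately is not enough. For example, $\theta_2=2\theta_1$ gives an orbit that stays on the curve $h_2=2h_1$.

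The paper's own frequencies violate the joint condition. At base $10^4$ and $d=64$, $\theta_i=10^{-(i-1)/8}$, so $\theta_{i+8}=\theta_i/10$. In that setting the strong reading is actually false. Take only pairs $1$ and $9$ active, with pair $9$ aligned, pair $1$ anti-aligned ($\psi_1=-K\theta_1+\pi$), and $r_9=1000\,r_1$. The profile is then $r_9\cos(s/10)-r_1\cos s$ with $s=\delta+K$, which is uniquely maximized at $s=0$ over the integers. The misaligned target is therefore never overtaken at any length.

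Your fallback to the weak claim is thus not just a hedge for pathological spectra. It is the version that holds in the paper's setting, and it is exactly what the paper's proof delivers.
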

\begin{remark}\label{rem:residual}
The residual $\xi$ has zero mean, so it does not shift the argmax of the
\emph{expected} profile; under $\E[\xi\xi^\top]=\sigma^2(I-uu^\top)$ it
contributes only an offset-independent variance term. The excluded diagonal
offset $\delta=0$ compares the query token with itself, and its logit carries
an additional content-covariance term $\E[\xi_n'^\top\xi_n]$; this term is
what suppresses self-attention in the $K{=}1$ case discussed in
Section~\ref{sec:disc}. The empirical carrier fractions reported below show that the carrier term
dominates in practice.
\end{remark}
\paragraph{Empirical support.} The carrier energy
$\|\text{mean}\|^2/\E\|\text{proj}\|^2$ is $0.69/0.68$ at $K{=}1$, $0.82/0.86$ at
$K{=}2$, and $0.80/0.86$ at $K{=}3$ (keys/queries), against a
no-shared-direction baseline of $1/d=0.016$, and the carrier-only profile
peaks at exactly $-K$ for $K\in\{2,3\}$ (Fig.~\ref{fig:validation}). The
carriers are pre-rotated to the target by exactly the amount
Lemma~\ref{lem:select} predicts: $\cos(m_Q, R_{-K}m_K)=0.62, 0.83, 0.87$ for
$K=1,2,3$, while the unrotated alignment $\cos(m_Q,m_K)=0.71, 0.07, -0.40$
falls off as $K$ grows. At $K{=}1$ the target adjoins the query's own
position and the carriers are near-parallel ($\cos(m_Q,m_K)=0.71$, profile
peak at $0$), a special case we characterize separately
(Section~\ref{sec:disc}).

\subsection{Attention Dilution Explains the Accuracy Decay}
\begin{corollary}[Decay law]\label{cor:decay}
Let $\Delta_L=\{-(L{-}1),\dots,0\}$ and let
$\mu:=\min_{\delta\in\Delta_L,\,\delta\ne -K}\big(\phi(-K)-\phi(\delta)\big)>0$
be the profile margin. The target attention weight
$a_\star(L)=e^{\phi(-K)}/\sum_{\delta\in\Delta_L}e^{\phi(\delta)}$ satisfies
\[
a_\star(L)=\frac{1}{1+(L-1)\,\bar r_L}
\;\ge\;\frac{1}{1+(L-1)\,e^{-\mu}},
\]
where $\bar r_L$ is the average of $e^{\phi(\delta)-\phi(-K)}$ over the $L-1$
non-target offsets, and $a_\star(L)$ is strictly decreasing in $L$.
\end{corollary}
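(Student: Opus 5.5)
The plan is to work directly from the definition of $a_\star(L)$ as a softmax weight over the finite offset set $\Delta_L$, treating $\phi$ as a fixed function of $\delta$ alone. By Lemma~\ref{lem:equiv} and Corollary~\ref{cor:transfer}, $\phi$ has no dependence on $L$; the only thing that changes with length is which offsets are present in $\Delta_L$. I assume $L\ge K+1$, so that $-K\in\Delta_L$ and there are exactly $L-1$ non-target offsets. Following the statement, I also take $\phi$ to be defined at $\delta=0$; Remark~\ref{rem:residual} notes that this entry may carry an extra covariance term, but here it is simply one more summand.

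\textbf{Step 1 (identity).} I divide the numerator and the denominator of $a_\star(L)$ by $e^{\phi(-K)}$. This gives
\[
a_\star(L)=\Big(1+\textstyle\sum_{\delta\in\Delta_L,\delta\ne-K}e^{\phi(\delta)-\phi(-K)}\Big)^{-1}.
\]
Writing the sum as $(L-1)\bar r_L$, with $\bar r_L$ defined as the average of these $L-1$ ratios, yields the stated equality.

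\textbf{Step 2 (lower bound).} By the definition of $\mu$, every non-target offset satisfies $\phi(\delta)-\phi(-K)\le-\mu$. Hence each ratio is at most $e^{-\mu}$, and therefore $\bar r_L\le e^{-\mu}$. The map $x\mapsto 1/(1+(L-1)x)$ is decreasing for $x\ge0$, so $a_\star(L)\ge 1/(1+(L-1)e^{-\mu})$. Two remarks are worth recording here. First, $\mu$ as defined depends on $L$ through $\Delta_L$. Second, Lemma~\ref{lem:select} guarantees $\phi(\delta)\le\phi(-K)$ for all $\delta$, and, under its uniqueness condition, strict inequality on a finite set; this is what justifies $\mu>0$. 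If one wants a bound that is uniform in $L$, one can replace $\mu$ by $\inf_L\mu_L$ whenever that infimum is positive.

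\textbf{Step 3 (monotonicity).} The key observation is that $\Delta_{L+1}=\Delta_L\cup\{-L\}$ while $\phi$ itself does not change. Passing from $L$ to $L+1$ therefore adds exactly one strictly positive term $e^{\phi(-L)}$ to the denominator and leaves the numerator $e^{\phi(-K)}$ unchanged. So $a_\star(L+1)<a_\star(L)$.

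I do not expect any step to be a genuine obstacle; all three are elementary. The one point that needs care is the bookkeeping. One must insist that $\phi$ is length-independent, which is exactly the content of equivariance; without it, adding a position could alter the existing logits, as happens under APE. And if the asymptotic reading of $L^{-1}$ decay is desired, one must distinguish the exact identity from the $\mu$-bound, since $\mu$ can only shrink as $L$ grows.
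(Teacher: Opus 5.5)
Your proposal is correct and follows the paper's proof: divide numerator and denominator by $e^{\phi(-K)}$ to get the exact form, bound each of the $L-1$ non-target ratios by $e^{-\mu}$, and note that passing from $L$ to $L+1$ appends the positive term $e^{\phi(-L)}$ to the denominator while leaving the numerator unchanged. One aside is inaccurate but does not affect the argument. Lemma~\ref{lem:select} controls $\phi$ only for $\delta\le-1$; the self-offset $\delta=0$ carries the extra covariance term of Remark~\ref{rem:residual}, and at $K{=}1$ the carrier-only profile even peaks at $0$. So Lemma~\ref{lem:select} does not itself justify $\mu>0$, and you rely instead on the statement's hypothesis that $\mu>0$.
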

\begin{proof}
Dividing numerator and denominator by $e^{\phi(-K)}$ gives
$a_\star(L)=1/(1+\sum_{\delta\ne-K}e^{\phi(\delta)-\phi(-K)})$, which is the
stated exact form; each of the $L-1$ non-target terms is at most $e^{-\mu}$,
which gives the bound. Increasing $L$ by one adds a positive term to the
denominator, so $a_\star$ strictly decreases.
\end{proof}
Because the kernel tail flattens at far offsets, the average suppression
factor $\bar r_L$ is nearly length-independent in practice, so $a_\star(L)$
is well described by the two-parameter form $1/(\alpha+c(L-1))$ with fitted
constants. Least-squares fits of $1/a_\star$ against $L-1$ give
$R^2=0.914, 0.926, 0.943$ for $K=1,2,3$, with $\alpha$ near $1$ for $K\ge2$
(Table~\ref{tab:carrier}). The peak \emph{location} is fixed by
Lemma~\ref{lem:equiv} while the peak \emph{height} erodes by
Corollary~\ref{cor:decay}; this is the two-part structure of
Fig.~\ref{fig:phenomenon}. The analysis treats per-sequence logits through
their mean profile $\phi$, which the high carrier fraction justifies.

\subsection{Negative Control: Absolute Encodings Are Position-Pinned}
\begin{proposition}[Position pinning]\label{prop:ape}
Under APE the logit is proportional to $(x_n+p_n)^\top W(x_t+p_t)$ with
$W=\Wq^\top\Wk$ and $n=L-1$. Suppose training lengths never exceed
$L_{\max}$, so only $\{p_0,\dots,p_{L_{\max}-1}\}$ enter the training loss,
and suppose the position embeddings are optimized with weight decay
$\lambda\ge0$. Then: \emph{(i)} the loss is independent of
$\{p_j:j\ge L_{\max}\}$, so each such $p_j$ receives zero gradient at every
step; it remains at its random initialization if $\lambda=0$ and decays
geometrically toward $0$ if $\lambda>0$. \emph{(ii)} In either case, $p_j$
for $j\ge L_{\max}$ is statistically independent of the task and encodes
nothing about the target offset. \emph{(iii)} At any test length
$L'>L_{\max}+K$, both the query embedding $p_{L'-1}$ and the target-key
embedding $p_{L'-1-K}$ are untrained. The target's logit therefore contains
no learned positional term that selects it, and any position the head does
prefer is fixed by training and does not track the target as $L'$ varies, so
the two can coincide only by chance.
\end{proposition}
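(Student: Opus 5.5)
The plan is to prove the three parts in order, each from the structure of the training loss rather than from any property of the learned solution.

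For \emph{(i)}, I would write the training objective as an expectation over lengths $L\in[L_{\min},L_{\max}]$ and sequences. For a length-$L$ sequence the forward pass uses only $z_t=x_t+p_t$ with $t\in\{0,\dots,L-1\}$. The query index $n=L-1$ is at most $L_{\max}-1$, so no $p_j$ with $j\ge L_{\max}$ appears in any logit, value, or output. The loss is therefore constant in these coordinates, and its partial derivative with respect to each such $p_j$ vanishes identically, at every parameter value and hence at every step. The update for $p_j$ then reduces to the regularizer alone:
\begin{itemize}
\item for $\lambda=0$, $p_j^{(s)}=p_j^{(0)}$;
\item for decoupled (or $\ell_2$) weight decay with step size $\eta$, $p_j^{(s)}=(1-\eta\lambda)^s p_j^{(0)}$, which is geometric decay to $0$ when $0<\eta\lambda<1$.
\end{itemize}
If an adaptive optimizer is used, I would note that zero gradients keep the moment estimates at zero, so the same conclusion holds. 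I would state this as a side remark rather than belabor it.

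For \emph{(ii)}, I would use the closed form from (i): $p_j^{(s)}=c_s\,p_j^{(0)}$, where $c_s$ is a deterministic scalar. This is a measurable function of the initialization of $p_j$ alone. That initialization is drawn independently of the data stream and of the target offset $K$, so $p_j^{(s)}$ is independent of the task, and its law does not depend on $K$. The one subtlety is that the other parameters are correlated with the data. Independence is nonetheless exact for $p_j$ itself, and conditional on $p_j$ nothing couples it to $K$. That is all "encodes nothing" needs to mean.

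For \emph{(iii)}, suppose $L'>L_{\max}+K$. Then $L'-1\ge L_{\max}$ and $L'-1-K\ge L_{\max}$, so both indices lie in the untrained set. I would expand the logit as
\[
(x_n+p_n)^\top W(x_t+p_t)=x_n^\top Wx_t+x_n^\top Wp_t+p_n^\top Wx_t+p_n^\top Wp_t.
\]
At the target key, every term involving $p_{L'-1}$ or $p_{L'-1-K}$ is built from untrained vectors. The learned matrix $W$ is therefore contracted against task-independent random (or near-zero) vectors, so by (ii) no term carries a learned preference for offset $-K$. The term $x_n^\top Wx_t$ is content-only: it depends on token values, not positions, and cannot single out the target either.

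The remaining claim is that any positional preference the head does have comes from the trained keys $p_t$ with $t<L_{\max}$. For a query at $n$ these terms are $p_n^\top Wp_t$, and they favor a fixed absolute index set inside the training range. Their location does not shift as $L'$ grows, while the target index $L'-1-K$ moves outside that range. The two can therefore coincide only through the random contributions of untrained vectors, which is the "by chance" clause.

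I expect (iii) to be the main obstacle, because "does not track the target" has to be stated carefully. My first attempt would be to show that the joint law of the logit vector, restricted to untrained positions, is invariant under permuting those positions. This holds if the initialization is i.i.d. across positions and $\lambda$ decays all of them by the same factor. Under that symmetry the target is no more likely to be the argmax than any other untrained position, which makes "coincidence only by chance" precise.
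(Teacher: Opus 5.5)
Your proposal is correct and follows the paper's proof essentially step for step: zero gradient for $p_j$ with $j\ge L_{\max}$, so only the regularizer moves it; hence $p_j$ is a function of its own initialization; and at $L'>L_{\max}+K$ the logits at untrained key positions are exchangeable, so the target is not systematically favored (you also make explicit the independence of the untrained $p_j$ from the trained parameters, which the paper leaves implicit). Three small fixes: your adaptive-optimizer aside holds only for decoupled decay, since with Adam's coupled $\ell_2$ term the gradient is $\lambda p_j\neq0$ and the shrinkage is sign-like rather than geometric, though $p_j$ is still a deterministic function of $p_j^{(0)}$ and so (ii)--(iii) survive; in (iii) the exchangeable set must exclude the query's own key position $L'-1$, whose logit contains $p_{L'-1}$ twice; and the sentence attributing the head's preference to $p_n^\top W p_t$ overlooks that $p_n=p_{L'-1}$ is itself untrained at test length (near zero, or a fresh draw for each $L'$ when $\lambda=0$), so the fixed preference really enters through $x_n^\top W p_t$, and it is the exchangeability argument, not that sentence, that carries the conclusion.
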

\begin{proof}
Training sequences occupy positions in $\{0,\dots,L_{\max}-1\}$, so $p_j$
with $j\ge L_{\max}$ never enters a forward pass and
$\partial\mathcal L/\partial p_j=0$ identically. Under (stochastic) gradient
descent with weight decay, the update for such $p_j$ is
$p_j\leftarrow(1-\eta\lambda)p_j$, which proves (i), and (ii) is immediate.
For (iii), note $L'-1>L'-1-K\ge L_{\max}$, so the target logit
$(x_{n'}+p_{L'-1})^\top W(x_{L'-1-K}+p_{L'-1-K})$, with $n'=L'-1$ the test
query position, involves positional vectors that are either zero or
independent of the task. Conditional on training, the logits at untrained
key positions are therefore exchangeable in distribution, over the i.i.d.\
tokens and, when $\lambda=0$, the task-independent initialization draws, so
no untrained position, the target included, is systematically favored.
\end{proof}
\begin{remark}
The failure depends on the out-of-range coordinates being \emph{untrained},
not on their magnitude. Empirically (Section~\ref{sec:exp}), out-of-range
$\|p_j\|$ stay at initialization for $\lambda=0$ and vanish for $\lambda>0$,
and the near-chance OOD accuracy is unchanged across
$\lambda\in\{0,10^{-4},10^{-2}\}$. The pin \emph{location} does depend on the
regularization: it concentrates near $L_{\max}-1-K$ under ridge and is lower
and more variable without it, but the failure itself persists.
Interpolating or rescaling the trained position table also fails to recover
accuracy. Consistent with this reading, randomizing position indices during
training so that indices beyond $L_{\max}$ are also trained is known to
restore length generalization \citep{ruoss2023randomized}; it removes
precisely the untrained-coordinate failure formalized here. For the same
reason, \citet{kazemnejad2023nope} adopt sinusoidal rather than learned
absolute encodings, since the learnable variant cannot produce embeddings for
unseen positions; the learned-APE collapse we study is that failure made
precise, whereas their fixed encodings are defined at every position and
degrade more gently instead of dropping to chance.
\end{remark}
Empirically, APE accuracy out of range is near chance, and all evaluated OOD
lengths satisfy $L'>L_{\max}+K$. We claim position pinning, not a specific
index: without weight decay the pin lies inside the trained range and varies
across seeds, while under ridge it tightens toward $L_{\max}-1-K$
(Section~\ref{sec:exp}).

\section{Experiments}
\label{sec:exp}
\begin{figure}[t]\centering
\includegraphics[width=0.8\linewidth]{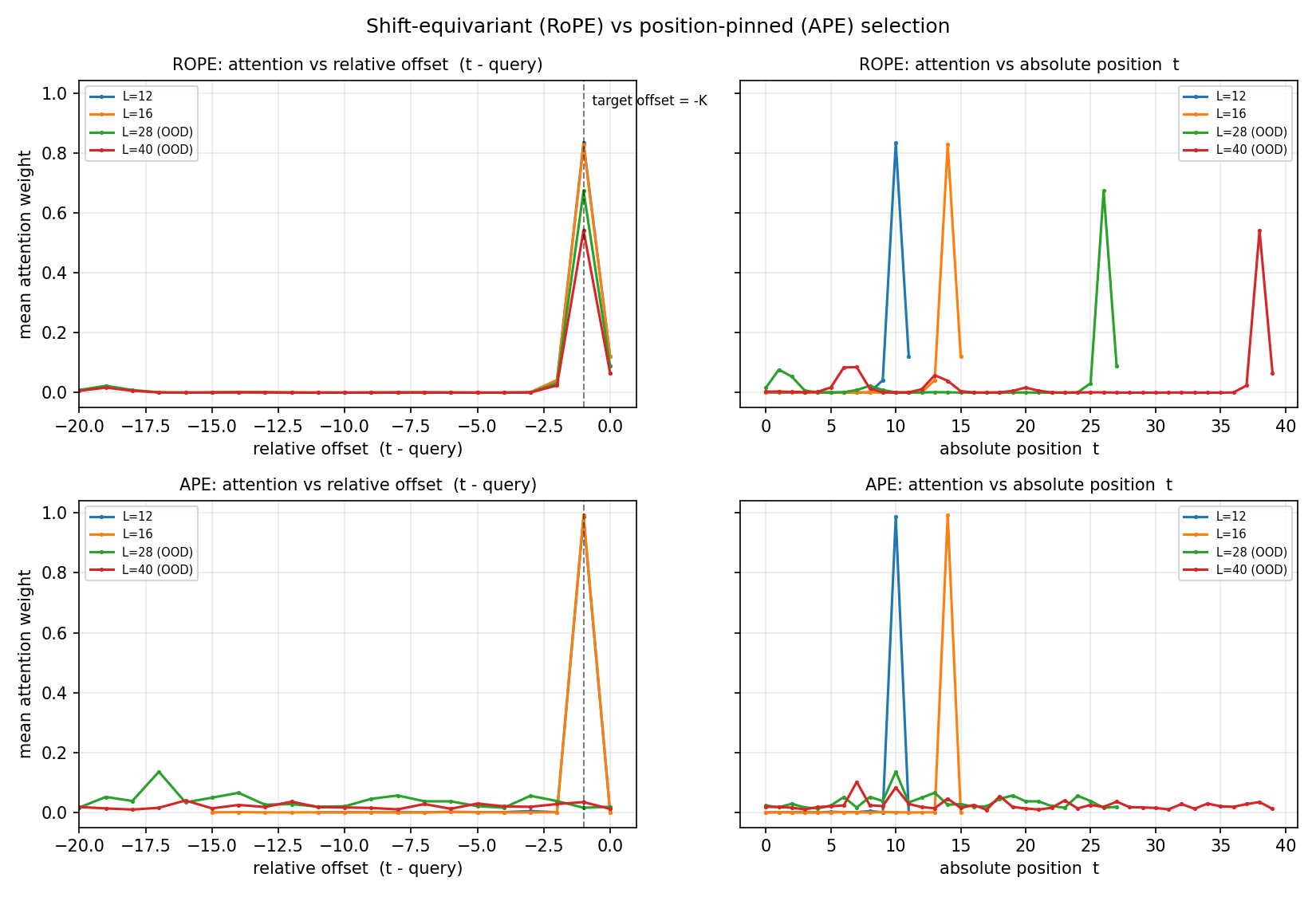}
\caption{Double dissociation. RoPE attention collapses onto a single profile
in \emph{relative} coordinates (left column); APE attention collapses in
\emph{absolute} coordinates (right column) and misses the target offset out
of distribution.}
\label{fig:dissociation}
\end{figure}
\begin{figure}[t]\centering
\includegraphics[width=0.95\linewidth]{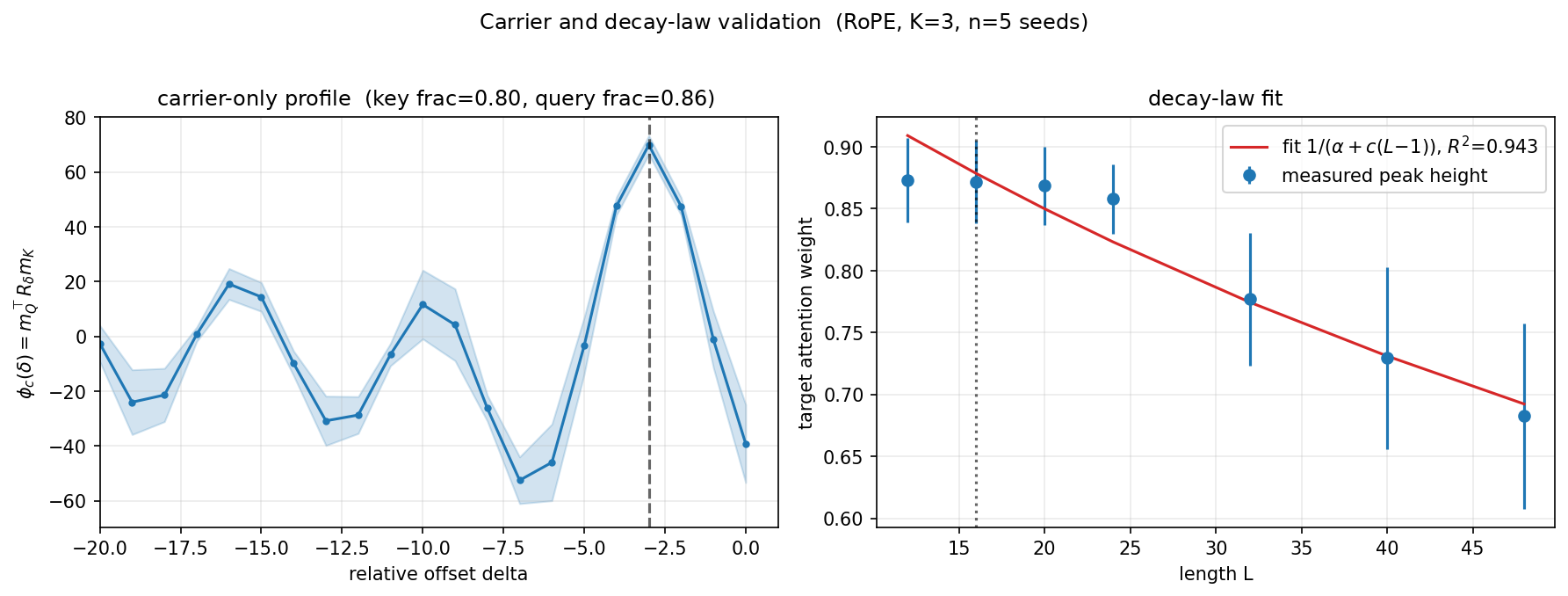}
\caption{Validation of Lemma~\ref{lem:select} and Corollary~\ref{cor:decay}
(RoPE, $K{=}3$, $5$ seeds). \textbf{Left:} the carrier-only profile peaks at
$-K$. \textbf{Right:} fit of $1/(\alpha+c(L{-}1))$ to the peak-height decay.}
\label{fig:validation}
\end{figure}

\paragraph{Architecture and task.} One layer, one head, $d=64$, vocabulary
size $m=20$, no MLP or layer normalization; the query is taken from the last
position; softmax attention; linear readout $U\in\R^{m\times d}$. RoPE uses
base $10^4$; APE uses a learned position embedding. We train on fixed-offset
retrieval with $K\in\{1,2,3\}$ and lengths uniform in
$[L_{\min},L_{\max}]=[8,16]$.

\paragraph{Training and evaluation.} Adam with learning rate $10^{-3}$,
batch size $256$, $4000$ steps, cross-entropy loss. We use $8$ seeds for the
phenomenon (Fig.~\ref{fig:phenomenon}, Table~\ref{tab:acc}) and $5$ seeds for
the validation study (Table~\ref{tab:carrier},
Figs.~\ref{fig:dissociation}--\ref{fig:validation}). Evaluation lengths are
$\{12,16,20,24,32,40,48\}$, with accuracy computed over $4096$ sequences and
attention probes averaged over $4096$ sequences; peak height is the maximum
over positions of the mean attention.

\paragraph{Metrics.} (i) accuracy; (ii) selected peak offset (argmax of mean
attention relative to the query); (iii) carrier energy
$\|\text{mean}\|^2/\E\|\text{proj}\|^2$ of projected keys and queries; (iv)
the carrier-only profile $\phi_c(\delta)=\mQ^\top R_\delta\mK$; (v) the decay
fit, linear least squares of $1/a_\star$ on $L-1$.

\paragraph{Compute.} All experiments run on CPU; no GPU is required. Seeds
are fixed to $0,\dots,n-1$, and our code, a single
script with one subcommand per experiment, will be released.

\begin{table}[t]\centering
\caption{Validation across offsets ($5$ seeds): key/query carrier fractions
versus the $1/d=0.016$ baseline, carrier-only profile peak, and decay-law
fit. At $K{=}1$ the carrier peak sits at the adjacent offset $0$
(Section~\ref{sec:disc}). Equivariance: selected offset $=-K$ in $100\%$ of
cells for all $K$.}
\label{tab:carrier}
\small
\setlength{\tabcolsep}{3.2pt}
\begin{tabular}{@{}ccccccc@{}}
\toprule
$K$ & key frac & query frac & carrier peak & $\alpha$ & $c$ & $R^2$\\
\midrule
1 & $0.69$ & $0.68$ & $0$ (adj.) & $0.82$ & $0.0220$ & $0.914$\\
2 & $0.82$ & $0.86$ & $-2$ & $1.06$ & $0.0067$ & $0.926$\\
3 & $0.80$ & $0.86$ & $-3$ & $1.00$ & $0.0096$ & $0.943$\\
\bottomrule
\end{tabular}
\end{table}

\begin{figure}[t]\centering
\includegraphics[width=\linewidth]{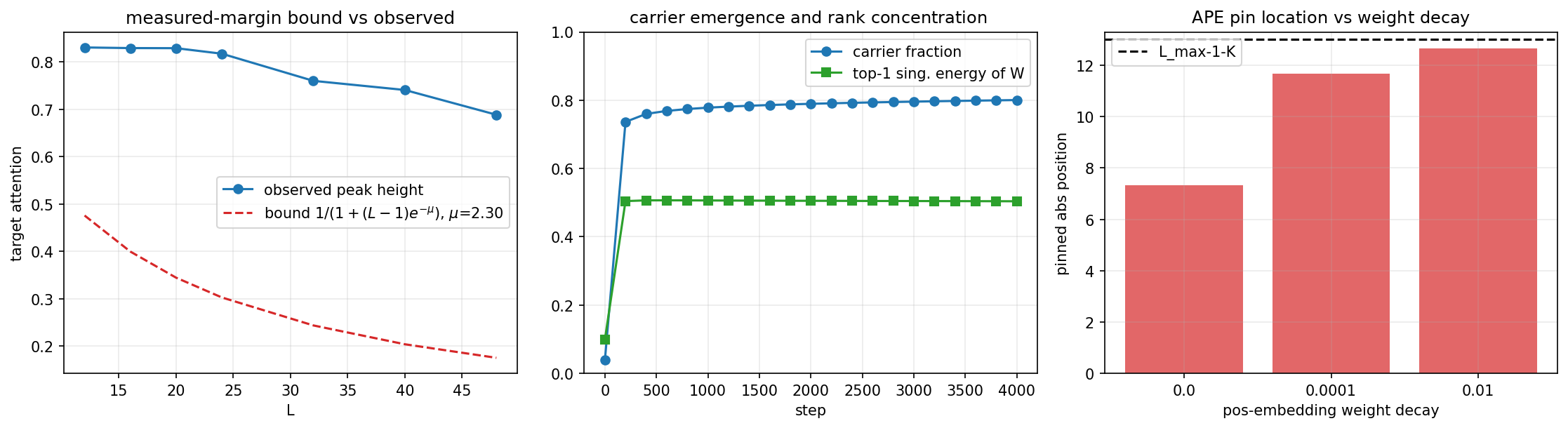}
\caption{Mechanism and robustness controls. \textbf{Left:} the
measured-margin lower bound versus observed target attention (valid but loose).
\textbf{Middle:} the carrier fraction and top-1 singular-value energy of $W$
emerge together early and plateau. \textbf{Right:} APE pin location versus
weight decay; OOD accuracy stays near chance throughout.}
\label{fig:reviewer}
\end{figure}

\paragraph{Robustness and mechanism checks.} Figure~\ref{fig:reviewer}
summarizes additional controls, with the full sweeps in the supplement. The
carrier and the rank concentration of $W$ co-emerge within a few hundred steps
and then plateau, matching the low-rank attention-as-SVM prediction, and the
measured margin $\mu$ gives a valid but loose bound $1/(1+(L-1)e^{-\mu})$ on
the target weight. The carrier fraction holds in $0.72$--$0.86$ across widths
$d\in\{32,64,128\}$, vocabulary sizes $\{10,20,40\}$, length ranges, and
softmax temperatures, while the dilution rate $c$ falls with width as wider
models sharpen the kernel. On the APE side the failure survives weight decay
$\lambda\in\{0,10^{-4},10^{-2}\}$, and inference-time fixes (table
interpolation, clamp-to-last, index rescaling) all leave accuracy at chance.
The mechanism is likewise robust to the rotary base (OOD accuracy
$1.00/1.00/0.97$ for base $\in\{10^3,10^4,10^5\}$, selecting $-K$
throughout), and the kernel $g(s)=\sum_j\cos(s\theta_j)$ has a unique integer
maximum at $s=0$ ($\max_{s\ge4} g(s)/g(0)\approx0.67$--$0.79$), so the
uniqueness condition of Lemma~\ref{lem:select} holds in practice.

\subsection{Other Relative Schemes and Content-Conditioned Selection}
\label{sec:relpe}
\begin{figure}[t]\centering
\includegraphics[width=0.8\linewidth]{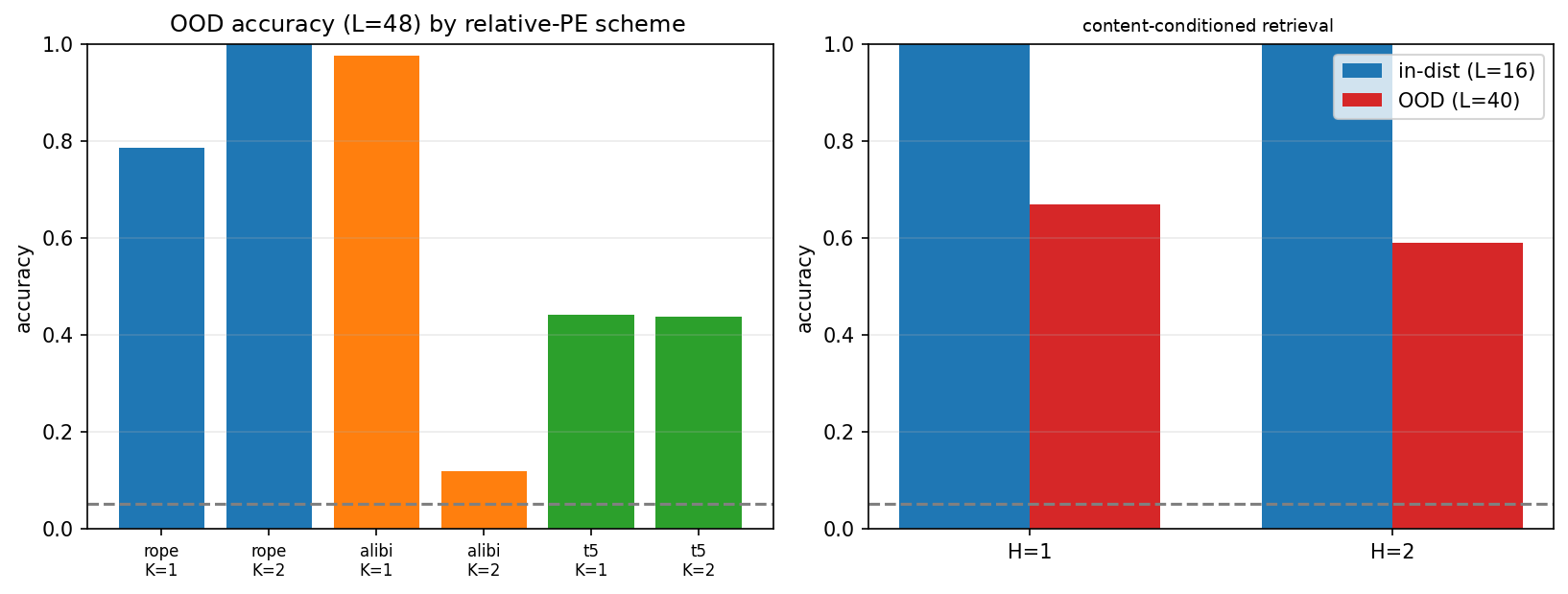}
\caption{\textbf{Left:} OOD accuracy ($L{=}48$) by relative-PE scheme and
offset. ALiBi solves $K{=}1$ but fails at $K{=}2$; T5 and RoPE select
both offsets, T5 with faster decay. \textbf{Right:} content-conditioned
retrieval; a single head ($H{=}1$) already solves it in distribution.}
\label{fig:reviewer2}
\end{figure}

\paragraph{The account generalizes across relative schemes.} We repeat the
fixed-offset experiment with ALiBi \citep{press2022alibi}, a monotonic distance
bias, and a T5-style learned per-offset bias (Fig.~\ref{fig:reviewer2}). All three are offset-equivariant
(Lemma~\ref{lem:equiv} holds for any logit depending only on $t-n$), so all
three transfer their rule, and they differ on the two axes the account
predicts. \emph{Capacity}: ALiBi solves $K{=}1$ (accuracy $\approx0.97$) but fails at
$K{=}2$ ($0.11$--$0.13$, chance $0.05$), selecting $-1$ because a fixed
locality bias cannot isolate a non-adjacent offset; the learnable T5 and
RoPE select any $K$.
\emph{Dilution}: T5 decays faster ($1.00$ to $0.44$) as far positions share its
clamped bucket bias, whereas RoPE stays at $1.00$. Offset selection plus
dilution is the shared mechanism, capacity and dilution rate the
scheme-dependent factors.

\paragraph{Larger offsets and content-conditioned selection.} The carrier
persists at larger offsets: the carrier-only profile peaks at exactly $-K$ for
$K\in\{2,3,4,5\}$. We also test a \emph{content-conditioned} task in which a
mode token at the query chooses which of two offsets ($K_0{=}1$ or $K_1{=}3$)
to retrieve. A single head solves it in distribution ($1.00$), and its
mode-conditioned carrier splits as predicted, peaking at $-3$ for mode $1$ and
at $0$ for mode $0$ (the $K{=}1$ adjacency of Section~\ref{sec:disc}). The
mechanism thus extends to data-dependent selection: the query content picks the
rotation alignment, and the residual term is signal, not noise around a fixed
offset.

\subsection{Bridging to a Realistic Architecture}
\label{sec:real}
\begin{figure}[t]\centering
\includegraphics[width=\linewidth]{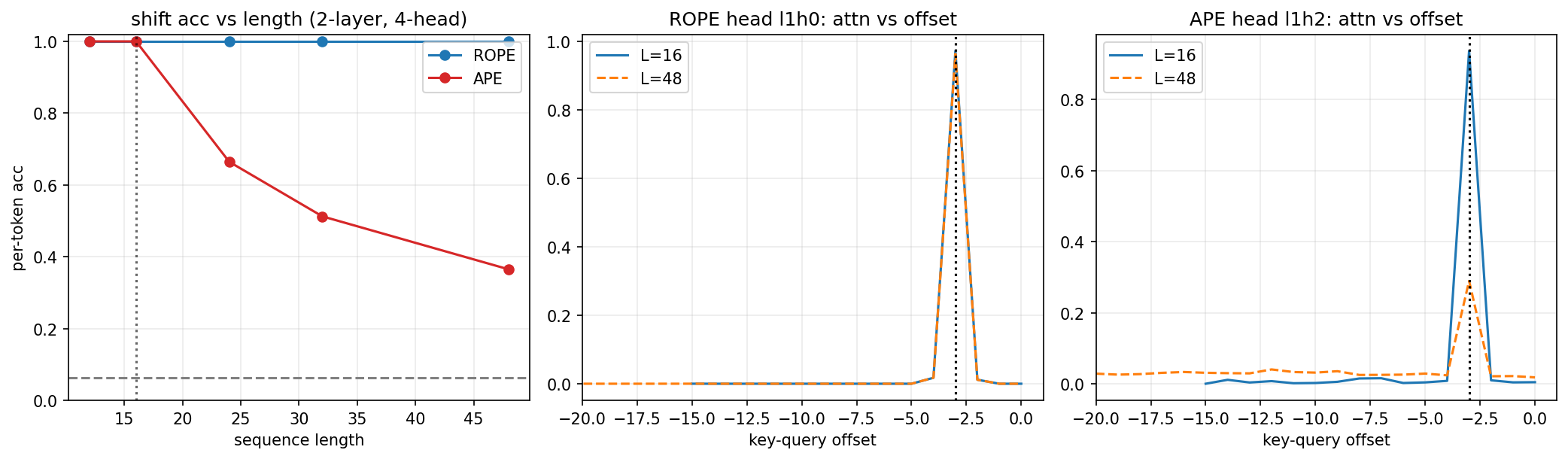}
\caption{The mechanism in a $2$-layer, $4$-head transformer with MLP and
LayerNorm on a full-sequence shift task
($\text{out}[i]=\text{in}[i{-}K]$, $K{=}3$), trained on lengths $\le16$.
\textbf{Left:} RoPE generalizes to $L{=}48$ ($1.00$) while learned APE degrades
out of range ($0.66/0.51/0.37$). \textbf{Middle:} the RoPE shift head's offset
profile coincides at $L{=}16$ and $L{=}48$ and peaks at $-K$, preserving
equivariance. \textbf{Right:} the APE head peaks at $-K$ in range but smears
toward chance at $L{=}48$.}
\label{fig:real}
\end{figure}
To test whether the mechanism is an artifact of the one-layer setting, we
train a decoder-only transformer with two layers, four heads, an MLP, and
LayerNorm on a full-sequence \emph{fixed-shift} task,
$\text{out}[i]=\text{in}[i{-}K]$ at every position, with content tokens drawn
from a small vocabulary so that the map is purely positional. All three findings transfer (Fig.~\ref{fig:real}).
The phenomenon reproduces: RoPE keeps $1.00$ accuracy out to $L{=}48$ after
training on lengths at most $16$, while APE falls to $0.37$. The APE decline
is position pinning read per token: it stays accurate at trained positions and
drops to chance beyond $L_{\max}$, so per-token accuracy tracks the in-range
fraction $\approx L_{\max}/L$ rather than collapsing outright, exactly as
Proposition~\ref{prop:ape} predicts. Equivariance
holds in a real head: the shift head's attention-versus-offset profiles at
$L{=}16$ and $L{=}48$ coincide and peak at $-K$ for RoPE, whereas the APE
head's profile smears toward chance out of range. The carrier survives depth,
heads, and the MLP: the RoPE shift head's key-input carrier fraction is
$0.976$ (baseline $1/d_h=0.021$), against $0.038$ for the failing APE head.
The selection mechanism is therefore not a one-layer artifact.

\section{Related Work}
\paragraph{Implicit bias of attention.} \citet{tarzanagh2023svm}
characterize the direction of the key--query product as a max-margin token
selector. We inherit that reduction but ask a question their analysis leaves
open: what \emph{positional} structure the selected solution has, and how it
behaves at unseen lengths. Their result says a token is selected; we show
which relative offset is selected, and that the offset transfers across
lengths.
\paragraph{Implicit bias for extrapolation in recurrent models.}
\citet{cohenkarlik2022temporal} show that although recurrent networks admit
many interpolating solutions that fail on longer sequences, gradient descent
from suitable initialization converges to solutions that extrapolate. Their
analysis is tied to the recurrent parameterization and does not transfer to
attention, where there is no recurrent operator and selection runs through a
softmax over positional logits. Our mechanism, offset equivariance of a
rotary kernel, is specific to attention with positional encoding.
\paragraph{Training dynamics of relative versus absolute encodings.}
Closest to our question, \citet{sabbaghi2024symmetry} prove for a one-layer
\emph{linear} transformer trained by gradient descent with vanishing weight
decay, on a translation-symmetric regression task, that a relative
parameterization length-generalizes while a learned absolute one drives its
out-of-range position vectors to zero and fails, the linear analogue of our
Proposition~\ref{prop:ape}. Their setting is
softmax-free and non-rotary, whereas the mechanism we isolate, offset
equivariance of the rotary kernel and the target-aligned carrier, is a
property of softmax attention. Their linear relative model \emph{succeeds},
whereas our linear-attention control (Section~\ref{sec:disc}) shows that
removing softmax destroys extrapolation once the rotary frequencies outnumber
the training offsets.
\paragraph{Length generalization: expressivity versus selection.}
The RASP-L conjecture \citep{zhou2023rasp} and the formal framework of
\citet{huang2025formal} establish whether a length-generalizing solution
\emph{exists} or is identifiable, the latter specifically for learnable
absolute positional encodings. We address the complementary
\emph{optimization} question of whether training selects such a solution, and
we exhibit a task where existence holds for both encodings yet only the
relative encoding's implicit bias selects the generalizing one.
\paragraph{Positional encodings and in-context selection.}
A large empirical literature designs positional encodings for better
extrapolation: rotary embeddings \citep{su2021roformer}, additive linear
biases \citep{press2022alibi}, randomized position indices
\citep{ruoss2023randomized}, task-structured position identifiers
\citep{cho2024position}, and the removal of explicit encodings so that
causal masking supplies position implicitly \citep{kazemnejad2023nope}. That
line of work asks \emph{which} scheme extrapolates. We ask \emph{why}, at the
level of the solution gradient descent selects, and our account
\emph{predicts} the relative-over-absolute ordering rather than measuring it.
Closest in spirit, \citet{barbero2024round} dissect how a trained language
model uses rotary frequencies, finding positional attention patterns built
from the highest frequencies; our kernel analysis characterizes how those
frequencies determine selection and dilution on a task where the link can be
verified exactly.
The existence-versus-selection distinction also surfaces beyond length
generalization: in in-context linear regression a model acts as a restricted
Bayesian estimator until pretraining task diversity crosses a threshold, then
implements the broader ridge solution \citep{raventos2023diversity}, again a
capable solution representable throughout, with training deciding whether it is
selected. Our ALiBi/T5/rotary comparison (Section~\ref{sec:relpe}) shows the
account is not RoPE-specific.

\section{Discussion and Limitations}
\label{sec:disc}
\paragraph{The $K{=}1$ special case.} For the immediate neighbor the carriers
are near-parallel (kernel centered at $0$) and suppressing the self-logit
suffices, via the content covariance of Remark~\ref{rem:residual}; the
explicit pre-rotation appears only for $K\ge2$, so we state the mechanism
claim for $K\ge2$ and treat $K{=}1$ separately.
\paragraph{Asymptotic selection.} The attention-SVM characterization is a
regularization-path or normalized-GD limit that finite-time training may not
reach; our experiments bridge that gap, and we characterize the \emph{selected
solution}, not the trajectory.
\paragraph{Scope.} Our analysis covers one layer, a single head, and
synthetic retrieval. Natural next steps are depth with NoPE, where causal
masking induces relative structure \citep{kazemnejad2023nope}; a true
two-layer \emph{induction} circuit \citep{olsson2022induction}, where a
content match rather than a query token sets the offset; and copying and
arithmetic benchmarks.
\paragraph{Multi-head behavior and capacity.} On the content-conditioned task
a single head implements both offsets by query-conditioned rotation; with two
heads the mode-$K_1$ selection becomes \emph{distributed}, solved at $1.00$
but no longer attributable to either head's mean carrier. Each head's $W$ is
dominated by but not equal to one direction (top singular-value energy
$0.39$--$0.57$), tempering a strict rank-one reading, and splitting width
across heads \emph{raises} dilution (peak attention at $L{=}48$ falls from
$0.69$ at $H{=}1$ to $0.30$ at $H{=}4$) as each head carries fewer
frequencies. The operative resource is the per-head frequency count; how the
carrier composes across heads remains open.
\paragraph{Reconciling with RoPE at scale.} Large rotary models degrade beyond
their training window, motivating remedies such as position interpolation
\citep{chen2023interpolation} and frequency-aware rescaling
\citep{peng2024yarn}. Our account separates two failure channels: equivariance
transfers the selection rule but does not preserve attention mass, and the
dilution law erodes the target weight at a rate set by kernel width. So the
account predicts not outright success at scale but degradation by dilution,
while absolute encodings fail structurally at the first unseen position,
matching the fragile, seed-sensitive length generalization seen in practice
\citep{zhou2024robust}.
\paragraph{Softmax normalization is essential.} Recasting selection as an
exactly solvable linear-attention least-squares problem removes the
asymptotic-SVM and shared-carrier assumptions, and does \emph{not} reproduce
the mechanism. Trained linear attention fits the in-distribution task by a
min-norm interpolation across rotary frequencies rather than a target-aligned
carrier: at $d{=}64$ the carrier inner product $\cos(\mQ, R_{-K}\mK)$ stays
near $0.1$ and the key carrier fraction drops to $0.07$ (baseline $0.016$),
against $0.83$ and $0.80$ under softmax. It does not extrapolate ($0.96$ in
distribution, $0.07$ at $L{=}48$, chance $0.05$, softmax $1.00$). The reason
is identifiability: the linear score profile
$f(\delta)=\sum_j A_j\cos(\delta\theta_j+\psi_j)$ is constrained only at the
$|\Delta|$ training offsets, so for $d>|\Delta|$ it is underdetermined
off-support, where the min-norm interpolant is generically $\Theta(1)$. Softmax removes this freedom: normalization couples all positions, so
the target's weight is set by the length-invariant margin $\mu$
(Corollary~\ref{cor:decay}) at every $L$. The carrier thus appears specific to
softmax; Assumption~\ref{ass:carrier} (confirmed in Section~\ref{sec:exp}) and
the asymptotic-SVM limit remain assumptions of Lemma~\ref{lem:select}.
\paragraph{Why the carrier forms.} Two questions separate: whether the
carrier \emph{must} appear, and whether gradient descent \emph{produces} it.
The first is settled by Proposition~\ref{prop:necessity}: since the zero-mean
residuals cannot move the expected argmax, content-independent selection
forces the carrier geometry, so it is structural rather than incidental. The
second, whether Adam on the cross-entropy loss lands on that geometry, we
leave open, though the early co-emergence of the carrier and the rank
concentration of $W$ (Section~\ref{sec:exp}) indicates that it does.
Extending analyses of global max-margin convergence for the key--query
product \citep{vasudeva2024implicit} to positional logits and the
cross-entropy readout is a natural route.

\section{Conclusion}
On a task isolating positional selection, length generalization in
attention is governed by implicit bias: rotary encodings make selection
offset-equivariant, so the rule transfers exactly, while absolute encodings
pin to the training range. The selected rule is a target-aligned carrier whose
accuracy erodes by dilution, long reported and now explained at the level of
the selected solution.

\bibliographystyle{plainnat}
\bibliography{references}

\clearpage
\appendix
\section*{Appendix}

This appendix collects the complete proofs (Appendix~\ref{app:proofs}),
the linear-attention control that isolates the role of softmax
(Appendix~\ref{app:linear}), the full robustness and mechanism sweeps
summarized in the main paper (Appendix~\ref{app:sweeps}), the multi-head and
content-conditioned measurements (Appendix~\ref{app:multihead}), and the
experimental and reproducibility details (Appendix~\ref{app:details}).
Numbered results (Lemma~1, Lemma~2, Proposition~1, etc.) refer to the main
paper. Unless noted, tables report values from the released script on the
reference setup ($d{=}64$, vocabulary $20$, $[L_{\min},L_{\max}]{=}[8,16]$,
Adam at $10^{-3}$, batch $256$, $4000$ steps), which match the values
reported in the main text.

\section{Complete Proofs}
\label{app:proofs}

Throughout, $R_t$ is the block-diagonal rotary matrix whose $i$th $2\times2$
block is $\Rot(t\theta_i)$, with inverse frequencies
$\theta_i=b^{-2(i-1)/d}$ for base $b$. We use two elementary facts about
planar rotations: $\Rot(\alpha)^\top\Rot(\beta)=\Rot(\beta-\alpha)$, and for
any $w\in\R^2$ and angle $\gamma$, $w^\top\Rot(\gamma)w=\|w\|^2\cos\gamma$
(the antisymmetric part of $\Rot(\gamma)$ contributes nothing to the
quadratic form). Block-diagonality lifts both facts to $R_t$.

\subsection{Lemma 1 (Equivariance)}
\begin{lemmaS}
For any $\Wq,\Wk$, the RoPE logit between a query at position $n$ and a key at
position $t$ depends on $(n,t)$ only through $\delta=t-n$:
$\sqrt d\,\ell_t=x_n^\top\Wq^\top R_\delta\Wk x_t$.
\end{lemmaS}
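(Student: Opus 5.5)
The plan is to reduce everything to a single matrix identity, $R_n^\top R_t = R_{t-n}$, and then substitute it into the bilinear form defining the logit. By the setup, $q = R_n \Wq x_n$ and $k_t = R_t \Wk x_t$, so
\[
\sqrt d\,\ell_t = q^\top k_t = (R_n\Wq x_n)^\top (R_t \Wk x_t) = x_n^\top \Wq^\top \bigl(R_n^\top R_t\bigr) \Wk x_t .
\]
This step uses only transposition of a product. Nothing is assumed about $\Wq$ or $\Wk$, which is why the lemma holds for arbitrary projections.

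The key step is the identity $R_n^\top R_t = R_{t-n}$. I will prove it blockwise. Since $R_n$ and $R_t$ are block-diagonal with conformal $2\times 2$ blocks, both the transpose and the product act block by block. The $i$th block of $R_n^\top R_t$ is therefore $\Rot(n\theta_i)^\top \Rot(t\theta_i)$. I will verify the planar fact $\Rot(\alpha)^\top\Rot(\beta) = \Rot(\beta-\alpha)$ directly. It follows because $\Rot(\alpha)^\top = \Rot(-\alpha)$, and planar rotations compose by adding angles, which is the cosine/sine addition formulas. Applying this with $\alpha = n\theta_i$ and $\beta = t\theta_i$ gives the block $\Rot((t-n)\theta_i)$. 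This is exactly the $i$th block of $R_{t-n}$, because every block uses the same integer offset multiplied by its own frequency $\theta_i$.

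Substituting the identity yields $\sqrt d\,\ell_t = x_n^\top \Wq^\top R_\delta \Wk x_t$ with $\delta = t-n$. In this expression the positions enter only through $R_\delta$, which proves the claim.

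There is no real obstacle. The only point needing care is that the angle-difference identity holds uniformly across all blocks. This requires every block to be indexed by the same position scalar, with $\theta_i$ fixed per pair, which is true by construction. It also requires the coordinate pairing used for $R_n$ and $R_t$ to be the same, so that the blocks align when multiplied.
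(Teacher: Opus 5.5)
Your proposal is correct and follows the paper's proof essentially step for step. Like the paper, you expand $q^\top k_t$ to isolate $R_n^\top R_t$, verify $R_n^\top R_t = R_{t-n}$ blockwise via $\Rot(\alpha)^\top\Rot(\beta)=\Rot(\beta-\alpha)$, and substitute back.
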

\begin{proof}
The head forms $q=R_n\Wq x_n$ and $k_t=R_t\Wk x_t$, so
$\sqrt d\,\ell_t=q^\top k_t=(R_n\Wq x_n)^\top(R_t\Wk x_t)
=x_n^\top\Wq^\top R_n^\top R_t\Wk x_t$. Blockwise,
$R_n^\top R_t$ has $i$th block $\Rot(n\theta_i)^\top\Rot(t\theta_i)
=\Rot((t-n)\theta_i)$, i.e.\ $R_n^\top R_t=R_{t-n}=R_\delta$. Substituting
gives the claim; no term depends on $n$ or $t$ except through $\delta$.
\end{proof}

\subsection{Lemma 2 (Target-aligned selection)}
\begin{assumptionS}[Shared carrier]
There exist a unit vector $u$ and a scalar $\rho>0$ with
$\Wk x_t=\rho u+\xi_t$ and $\Wq x_n=\rho R_{-K}u+\xi'_n$, where
$\E[\xi_t]=\E[\xi'_n]=0$ and $\xi_t\perp u$.
\end{assumptionS}
\begin{lemmaS}
Under the shared-carrier assumption, the content-averaged score profile over
keys $t\neq n$ (offsets $\delta\le-1$) is
$\phi(\delta)=\E[\sqrt d\,\ell_t\mid\delta]=\mQ^\top R_\delta\mK
=\rho^2 g(\delta+K)$, where $\mK=\rho u$, $\mQ=\rho R_{-K}u$,
$g(s)=\sum_{i=1}^{d/2}p_i\cos(s\theta_i)$, and $p_i=u_{2i-1}^2+u_{2i}^2\ge0$
with $\sum_i p_i=1$. Then $g(s)\le g(0)=1$, so $\phi$ is maximized at
$\delta=-K$; if no nonzero integer $s$ in the evaluated range satisfies
$s\theta_i\in2\pi\mathbb Z$ for all $i$ with $p_i>0$, the maximizer is unique.
\end{lemmaS}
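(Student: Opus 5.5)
The plan is to reduce the expected logit to a quadratic form in the carrier and then diagonalize that form pair by pair. By Lemma~1, for a key at offset $\delta$ we have $\sqrt d\,\ell_t=(\Wq x_n)^\top R_\delta(\Wk x_t)$. I will substitute the decompositions from the shared-carrier assumption and take expectations over content.

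For $t\neq n$ the tokens $s_n$ and $s_t$ are i.i.d., so $\Wq x_n$ and $\Wk x_t$ are independent. The expectation therefore factors as $\E[\Wq x_n]^\top R_\delta\,\E[\Wk x_t]$. Because the residuals $\xi_t$ and $\xi'_n$ are mean-zero, these means are exactly $\mQ=\rho R_{-K}u$ and $\mK=\rho u$. All three cross terms vanish by independence and zero mean. The condition $\xi_t\perp u$ is not needed for this step; only independence and zero mean are used. This gives $\phi(\delta)=\mQ^\top R_\delta\mK$, which is the first identity.

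Next I would rewrite $\mQ^\top R_\delta\mK=\rho^2 u^\top R_{-K}^\top R_\delta u$. Since each block is a rotation, $R_{-K}^\top=R_K$, and the group law used in Lemma~1 gives $R_KR_\delta=R_{\delta+K}$. Hence $\phi(\delta)=\rho^2 u^\top R_{\delta+K}u$. I then expand blockwise using $w^\top\Rot(\gamma)w=\|w\|^2\cos\gamma$ for $w=(u_{2i-1},u_{2i})$. This holds because the skew part of a $2\times2$ rotation vanishes in the quadratic form. The result is $u^\top R_su=\sum_i p_i\cos(s\theta_i)=g(s)$, with $p_i\ge0$ and $\sum_ip_i=\|u\|^2=1$.

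For the maximization, I would note that $g(s)\le\sum_ip_i=1=g(0)$ because every cosine is at most $1$. So $\phi\le\rho^2=\phi(-K)$, and the maximum is attained at $\delta=-K$. This is attainable within the key range because $K\ge1$ forces $-K\le-1$. Equality at some $s\neq0$ holds iff $\cos(s\theta_i)=1$ for every $i$ with $p_i>0$ (terms with $p_i=0$ contribute nothing). That condition is equivalent to $s\theta_i\in2\pi\mathbb Z$ for all such $i$. The hypothesis excludes this for every nonzero integer $s=\delta+K$ in the evaluated range, so the maximizer is unique.

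I expect the only delicate point to be the factorization of the expectation. It needs $t\ne n$ and token independence, which is exactly why the diagonal $\delta=0$ is excluded, cf.\ Remark~\ref{rem:residual}. The rest is bookkeeping with rotation blocks.
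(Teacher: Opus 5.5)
Your proof is correct and follows the paper's argument essentially step for step. You factor the expectation using the independence of $x_n,x_t$ for $t\ne n$ and the mean-zero residuals, collapse $R_{-K}^\top R_\delta$ to $R_{\delta+K}$, expand blockwise into $g$, and bound each cosine by $1$, with the equality condition giving uniqueness. Your side remarks, that $\xi_t\perp u$ is never used and that $-K$ lies in the key range because $K\ge1$, are accurate points the paper leaves implicit.
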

\begin{proof}
For $t\neq n$ the tokens $x_n,x_t$ are independent, and by Lemma~1
$\sqrt d\,\ell_t=(\Wq x_n)^\top R_\delta(\Wk x_t)$. Taking expectations and
using $\E[a^\top M b]=\E[a]^\top M\,\E[b]$ for independent $a,b$,
$\phi(\delta)=\mQ^\top R_\delta\mK$ with $\mQ=\E[\Wq x_n]=\rho R_{-K}u$,
$\mK=\E[\Wk x_t]=\rho u$ (the residuals are mean-zero). Hence
$\phi(\delta)=\rho^2(R_{-K}u)^\top R_\delta u=\rho^2 u^\top R_K R_\delta u
=\rho^2 u^\top R_{\delta+K}u$, using $R_{-K}^\top=R_K$ and
$R_KR_\delta=R_{\delta+K}$. Blockwise,
$u^\top R_s u=\sum_i(u_{2i-1}^2+u_{2i}^2)\cos(s\theta_i)=g(s)$. Since
$\sum_i p_i=\|u\|^2=1$ and $\cos\le1$, $g(s)\le1$ with equality iff
$s\theta_i\in2\pi\mathbb Z$ for every $i$ with $p_i>0$. Setting $s=\delta+K$
gives the maximizer $\delta=-K$ and the stated uniqueness condition.
\end{proof}

\subsection{Proposition 1 (Necessity of the carrier)}
This is the converse of Lemma~2: the target-aligned carrier geometry is the
only one whose content-averaged profile attains its largest possible value at
the target, and the only one that keeps Lemma~2's length-uniform guarantee
that no competing offset overtakes it. In that sense Assumption~1 is forced,
not a modeling convenience.
\begin{propositionS}
For any $\Wq,\Wk$, set $\mQ=\E[\Wq x_n]$ and $\mK=\E[\Wk x_t]$. The
content-averaged logit equals the carrier profile,
\[
\E[\sqrt d\,\ell_t\mid\delta]=\mQ^\top R_\delta\mK
=\sum_{i=1}^{d/2}r_i\cos(\delta\theta_i-\psi_i),
\]
where $r_i=\|a_i\|\,\|b_i\|\ge0$ and $\psi_i=\angle a_i-\angle b_i$ are the
energy and phase gap of the $i$th rotary pair $(a_i,b_i)$ of $(\mQ,\mK)$, and
the content-dependent remainder is mean-zero. The profile attains its global
maximum $\sum_i r_i$ at the target offset $\delta=-K$ if and only if
$\psi_i=-K\theta_i$ for every $i$ with $r_i>0$, i.e.\ iff $\mQ$ and
$R_{-K}\mK$ are pairwise phase-aligned. Under the same incommensurability
condition as Lemma~2, $-K$ is then the unique maximizer.
\end{propositionS}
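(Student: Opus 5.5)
We follow the proof of Lemma~2 but drop the shared-carrier structure and work blockwise.

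\emph{Step 1: reduce to the mean profile.} By Lemma~1, $\sqrt d\,\ell_t=(\Wq x_n)^\top R_\delta(\Wk x_t)$. For $t\neq n$, write $\Wq x_n=\mQ+\xi'_n$ and $\Wk x_t=\mK+\xi_t$, where the residuals are mean-zero by definition of $\mQ,\mK$. Expanding the bilinear form gives four terms. The cross terms $\mQ^\top R_\delta\xi_t$ and $\xi_n'^\top R_\delta\mK$ are mean-zero. The term $\xi_n'^\top R_\delta\xi_t$ is also mean-zero, since $x_n$ and $x_t$ are independent, so its expectation factors into a product of zero means. Hence $\E[\sqrt d\,\ell_t\mid\delta]=\mQ^\top R_\delta\mK$, and the remainder is mean-zero.

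\emph{Step 2: blockwise phase form.} Split $\mQ$ and $\mK$ into rotary pairs $a_i,b_i\in\R^2$. Identify $\R^2$ with $\mathbb C$, so that $\Rot(\gamma)$ acts as multiplication by $e^{i\gamma}$. Then
\[
a_i^\top\Rot(\delta\theta_i)b_i=\mathrm{Re}\bigl(\overline{a_i}\,e^{i\delta\theta_i}b_i\bigr)=\|a_i\|\|b_i\|\cos(\delta\theta_i+\angle b_i-\angle a_i).
\]
With $\psi_i=\angle a_i-\angle b_i$ this is $r_i\cos(\delta\theta_i-\psi_i)$. Summing over blocks gives the displayed identity. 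If $r_i=0$ the phase is irrelevant, and we adopt that convention.

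\emph{Step 3: the characterization.} Since $\cos\le1$ and $r_i\ge0$, the profile is at most $\sum_i r_i$ for every $\delta$. Equality at $\delta=-K$ holds iff $\cos(-K\theta_i-\psi_i)=1$ for every $i$ with $r_i>0$. That is, $\psi_i\equiv-K\theta_i \pmod{2\pi}$, which is the stated condition read modulo $2\pi$; the proof must state this reading explicitly.

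To translate this into pairwise alignment, note that the $i$th block of $R_{-K}\mK$ is $\Rot(-K\theta_i)b_i$, whose angle is $\angle b_i-K\theta_i$. It is aligned with $a_i$ exactly when $\angle a_i-\angle b_i=-K\theta_i$.

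\emph{Step 4: uniqueness.} If alignment holds, then substituting $\psi_i=-K\theta_i$ gives profile $\sum_i r_i\cos((\delta+K)\theta_i)$. This is Lemma~2's $g$, with weights $r_i$ in place of $\rho^2p_i$. The same argument applies: equality at another $\delta$ would require $(\delta+K)\theta_i\in2\pi\mathbb Z$ for all $i$ with $r_i>0$, which the incommensurability hypothesis excludes.

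\emph{Main obstacle.} Step~1 needs care on two points. First, independence of $x_n$ and $x_t$ is exactly what kills the quadratic residual term. Second, the excluded diagonal offset $\delta=0$ must be kept out of the claim, since there that term contributes a covariance.

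The remaining steps are routine. The only other subtlety is bookkeeping: phases are defined modulo $2\pi$, and phases of zero blocks are undefined.
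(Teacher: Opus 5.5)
Your proof is correct and follows the paper's argument essentially step for step. Both write each projection as its mean plus a mean-zero residual and use independence for $\delta\le -1$ to reduce to $\mQ^\top R_\delta\mK$. Both then apply the blockwise polar identity, which you verify through the complex identification where the paper simply asserts it, followed by the $\cos\le 1$ bound with termwise equality and uniqueness from incommensurability with weights $r_i$. Your explicit mod-$2\pi$ reading of $\psi_i=-K\theta_i$ and the convention for zero blocks are harmless refinements that the paper leaves implicit.
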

\begin{proof}
\emph{(Content-averaged logit.)} Write $\Wk x_t=\mK+\xi_t$ and
$\Wq x_n=\mQ+\xi'_n$ with $\E[\xi_t]=\E[\xi'_n]=0$. By Lemma~1,
$\sqrt d\,\ell_t=(\mQ+\xi'_n)^\top R_\delta(\mK+\xi_t)$. For $\delta\le-1$ the
tokens are independent, so taking expectations kills the three terms carrying
a lone mean-zero factor and leaves $\E[\sqrt d\,\ell_t\mid\delta]=\mQ^\top
R_\delta\mK$. The remainder $\sqrt d\,\ell_t-\mQ^\top R_\delta\mK$ has zero
mean, hence cannot shift the argmax of the expected profile.

\emph{(Polar form.)} Let $a_i,b_i\in\R^2$ be the $i$th rotary pairs of
$\mQ,\mK$. Since $R_\delta$ acts on pair $i$ by $\Rot(\delta\theta_i)$ and
$a_i^\top\Rot(\gamma)b_i=\|a_i\|\|b_i\|\cos(\angle a_i-\angle b_i-\gamma)$,
\[
\mQ^\top R_\delta\mK=\sum_i a_i^\top\Rot(\delta\theta_i)b_i
=\sum_i r_i\cos(\delta\theta_i-\psi_i),
\]
with $r_i=\|a_i\|\|b_i\|$ and $\psi_i=\angle a_i-\angle b_i$ (using that
$\cos$ is even).

\emph{(Maximizer.)} Each cosine is at most $1$, so $\phi(\delta):=\mQ^\top
R_\delta\mK\le\sum_i r_i$ for all $\delta$, with equality at a given $\delta$
iff $\delta\theta_i-\psi_i\in2\pi\mathbb Z$ for every active pair
($r_i>0$). At $\delta=-K$ this reads $\psi_i=-K\theta_i$, i.e.\
$\angle a_i=\angle b_i-K\theta_i=\angle(\Rot(-K\theta_i)b_i)$, which is
exactly $\mQ$ and $R_{-K}\mK$ pairwise phase-aligned: the geometry of
Assumption~1 (there $a_i=\rho\,\Rot(-K\theta_i)u_i$ and $b_i=\rho u_i$, giving
$\psi_i=-K\theta_i$ and $r_i=\rho^2 p_i$, so $\phi(\delta)=\rho^2 g(\delta+K)$
as in Lemma~2). Under the incommensurability condition, every $\delta\neq-K$
in range leaves some active cosine below $1$, so $\phi(\delta)<\sum_i r_i$ and
$-K$ is unique. Because $\phi$ is content-independent, this maximizer is the
same at every length (Corollary~1), which is what lets content-independent
selection transfer.
\end{proof}
The residual can be quantified. Under
$\E[\xi\xi^\top]=\sigma^2(I-uu^\top)$ the fluctuation term contributes an
offset-independent variance $\sigma^2\rho^2$ to each logit, so
content-independent selection additionally requires the profile margin
$\mu=\min_{\delta\neq-K}(\phi(-K)-\phi(\delta))$ to dominate the fluctuation
scale; the measured carrier fractions (Table~\ref{tab:sweep}) and margin
(Table~\ref{tab:margin}) confirm this holds in practice.

\subsection{Corollary 2 (Decay law)}
\setcounter{corollaryS}{1}
\begin{corollaryS}
Let $\Delta_L=\{-(L-1),\dots,0\}$ and
$\mu=\min_{\delta\in\Delta_L,\delta\neq-K}(\phi(-K)-\phi(\delta))>0$. The
target attention weight
$a_\star(L)=e^{\phi(-K)}/\sum_{\delta\in\Delta_L}e^{\phi(\delta)}$ satisfies
$a_\star(L)=1/(1+(L-1)\bar r_L)\ge1/(1+(L-1)e^{-\mu})$, where $\bar r_L$
averages $e^{\phi(\delta)-\phi(-K)}$ over the $L-1$ non-target offsets, and
$a_\star$ is strictly decreasing in $L$.
\end{corollaryS}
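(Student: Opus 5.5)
The plan is to work directly from the definition of $a_\star(L)$ as a softmax weight over the offset set $\Delta_L$, which has exactly $L$ elements, one of them the target $-K$. First, divide numerator and denominator by $e^{\phi(-K)}$. This gives
\[
a_\star(L)=\Bigl(1+\sum_{\delta\in\Delta_L,\,\delta\neq-K}e^{\phi(\delta)-\phi(-K)}\Bigr)^{-1}.
\]
Since there are $L-1$ non-target offsets, the sum equals $(L-1)\bar r_L$ by the definition of $\bar r_L$ as their average. That gives the exact form.

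For the lower bound, use the margin $\mu$. For every non-target $\delta\in\Delta_L$ we have $\phi(-K)-\phi(\delta)\ge\mu$, so each ratio $e^{\phi(\delta)-\phi(-K)}$ is at most $e^{-\mu}$. Hence $\bar r_L\le e^{-\mu}$. Because $x\mapsto1/(1+(L-1)x)$ is decreasing for $x\ge0$, this yields $a_\star(L)\ge1/(1+(L-1)e^{-\mu})$. I would note that $\mu>0$ is an assumption of the statement, and that Lemma~2 or Proposition~1 under the incommensurability condition supplies it. Strictly, only $\mu$ finite is needed for the inequality; positivity is what makes the bound meaningful.

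For monotonicity, compare $L$ and $L+1$. By Lemma~1 and Corollary~1, $\phi$ is a single length-independent function of $\delta$. So $\Delta_{L+1}=\Delta_L\cup\{-L\}$, and the denominator sum gains exactly one extra term, $e^{\phi(-L)-\phi(-K)}>0$, while all other terms are unchanged. Hence $a_\star(L+1)<a_\star(L)$. This step tacitly requires $-K\in\Delta_L$, i.e.\ $L\ge K+1$, so that the target is present at both lengths; I would state this explicitly.

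I do not expect a real obstacle; the whole argument is bookkeeping. The one point needing care is the length-independence of $\phi$ used in the monotonicity step. Without Lemma~1, the existing terms could change with $L$ (as under APE, where absolute positions shift), and the strict-decrease claim would fail. I will therefore invoke Lemma~1 explicitly at that step.
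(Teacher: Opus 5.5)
Your proposal is correct and follows the paper's proof step for step. You divide by $e^{\phi(-K)}$ to get the exact form, bound each of the $L-1$ non-target ratios by $e^{-\mu}$, and note that passing from $L$ to $L+1$ appends one positive term to the denominator; your explicit appeal to the length-independence of $\phi$ (Lemma~\ref{lem:equiv}) and to $L\ge K+1$ spells out what the paper's proof leaves tacit.
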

\begin{proof}
Divide numerator and denominator by $e^{\phi(-K)}$:
$a_\star(L)=1/(1+\sum_{\delta\neq-K}e^{\phi(\delta)-\phi(-K)})$, the exact
form with $\bar r_L=\frac{1}{L-1}\sum_{\delta\neq-K}e^{\phi(\delta)-\phi(-K)}$.
Each of the $L-1$ terms is at most $e^{-\mu}$, giving the bound. Increasing
$L$ by one appends a positive term to the denominator, so $a_\star$ strictly
decreases. When the kernel tail flattens, $\bar r_L$ is nearly constant in
$L$, and $a_\star(L)\approx1/(\alpha+c(L-1))$ with $\alpha\approx1$; the fits
across the sweep of Table~\ref{tab:sweep} give $R^2\in[0.72,0.94]$, with
$R^2=0.914,0.926,0.943$ at the reference setup for $K=1,2,3$ (main paper).
\end{proof}

\subsection{Proposition 2 (Position pinning)}
\begin{propositionS}
Under APE the logit is proportional to $(x_n+p_n)^\top W(x_t+p_t)$ with
$W=\Wq^\top\Wk$ and $n=L-1$. If training lengths never exceed $L_{\max}$ and
the position table is optimized with weight decay $\lambda\ge0$, then:
\emph{(i)} the loss is independent of $\{p_j:j\ge L_{\max}\}$, so each such
$p_j$ has zero gradient and stays at initialization ($\lambda=0$) or decays
geometrically to $0$ ($\lambda>0$); \emph{(ii)} such $p_j$ is independent of
the task; \emph{(iii)} at any test length $L'>L_{\max}+K$, both $p_{L'-1}$ and
$p_{L'-1-K}$ are untrained, so no learned positional term selects the target,
which can be selected only by chance.
\end{propositionS}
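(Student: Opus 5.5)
The plan has three parts, one per item of the statement, and it follows the structure of the main-text argument with each step made explicit.

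\emph{Part (i).} For any training sequence of length $L\le L_{\max}$, the forward pass touches only positions $\{0,\dots,L-1\}\subseteq\{0,\dots,L_{\max}-1\}$. The logits are $(x_n+p_n)^\top W(x_t+p_t)$, the values are $W_V(x_t+p_t)$, and the readout is $U$; together these are functions of the parameters $E,\Wq,\Wk,W_V,U$ and of $p_0,\dots,p_{L_{\max}-1}$ only. The per-sequence loss, the minibatch loss, and the population loss are therefore constant in every $p_j$ with $j\ge L_{\max}$, so $\partial\mathcal L/\partial p_j\equiv0$ at every iterate.

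I then substitute this into the optimizer update. Under SGD with decoupled or $L_2$ weight decay the update is $p_j\leftarrow p_j-\eta(\partial_{p_j}\mathcal L+\lambda p_j)=(1-\eta\lambda)p_j$. Iterating gives $p_j^{(T)}=(1-\eta\lambda)^Tp_j^{(0)}$, which equals the initialization when $\lambda=0$ and decays geometrically to $0$ when $0<\eta\lambda<2$. For Adam, the experimental optimizer, I would state separately that the first and second moments of $p_j$ stay at $0$ because $p_j$ receives zero gradient. With decoupled decay (AdamW) the same geometric law holds. With coupled $L_2$ decay the effective gradient $\lambda p_j$ gives a normalized step that still drives $p_j$ monotonically toward $0$. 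In that case only the rate changes: the decay is sign-like rather than geometric.

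\emph{Part (ii).} The trajectory $p_j^{(T)}$ is a deterministic function of $p_j^{(0)}$ and of $(\eta,\lambda,T)$ alone; it does not depend on the data stream or on the other parameters. Since $p_j^{(0)}$ is drawn independently of the data and of $K$, $p_j^{(T)}$ is independent of the task. In particular its distribution is the same for every target offset $K$.

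\emph{Part (iii).} Let $L'>L_{\max}+K$. Then $L'-1>L'-1-K\ge L_{\max}$, so by (i)--(ii) both $p_{L'-1}$ and $p_{L'-1-K}$ are untrained. The target logit expands into four terms: $x_{n'}^\top Wx_{t}$, $x_{n'}^\top Wp_{t}$, $p_{n'}^\top Wx_{t}$ and $p_{n'}^\top Wp_t$. In each term the positional factors are either $0$ or task-independent, so none of them encodes the offset $K$. I would then condition on the trained weights. The key positions $t\in\{L_{\max},\dots,L'-2\}$ carry i.i.d.\ tokens and untrained vectors $p_t$, and when $\lambda=0$ these vectors are i.i.d.\ initialization draws. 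Their logits are therefore exchangeable, and no untrained position, the target included, is systematically favored. When $\lambda>0$ the untrained positions have identical zero embeddings, and exchangeability is immediate.

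Any systematic preference must therefore come from the trained vectors $p_0,\dots,p_{L_{\max}-1}$. Such a preference is fixed by training, while the target index $L'-1-K$ moves with $L'$, so the two coincide only by chance.

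\textbf{Main obstacle.} The step I expect to be hardest is making ``only by chance'' precise, because exchangeability covers only the untrained positions. To handle it, I would bound the probability that the argmax lands on $L'-1-K$ by the probability of selecting one particular out-of-range index, which is at most $1/(L'-1-L_{\max})$ under exchangeability, and treat the trained positions as a separate block that can never contain the target. The same block split handles the Adam case, where I would state the decay claim up to the optimizer-specific rate.
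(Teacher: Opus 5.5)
Your proposal is correct and matches the paper's proof step for step: $\partial\mathcal L/\partial p_j\equiv0$ for $j\ge L_{\max}$, the update $p_j\leftarrow(1-\eta\lambda)p_j$, dependence on the initialization alone, and exchangeability of the untrained key logits conditional on training. Your additions go beyond the paper's proof: the optimizer caveat that coupled $L_2$ decay under Adam is sign-like rather than geometric (and, with momentum, not strictly monotone near $0$), and the $1/(L'-1-L_{\max})$ bound from the trained/untrained block split. That bound is vacuous at the smallest admissible lengths, e.g.\ $K{=}1$, $L'=L_{\max}+2$, where the target is the only untrained non-query key.
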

\begin{proof}
Training sequences occupy positions $\{0,\dots,L_{\max}-1\}$, so for
$j\ge L_{\max}$ the vector $p_j$ never enters a forward pass and
$\partial\mathcal L/\partial p_j\equiv0$. The (stochastic) gradient step with
weight decay is $p_j\leftarrow(1-\eta\lambda)p_j$, which gives (i); (ii) is
immediate since $p_j$ is then a function only of its initialization. For
(iii), $L'>L_{\max}+K$ implies $L'-1>L'-1-K\ge L_{\max}$, so the target logit
$(x_{n'}+p_{L'-1})^\top W(x_{L'-1-K}+p_{L'-1-K})$ involves only untrained (zero
or task-independent) position vectors. Conditioned on training, the logits at
untrained key positions are exchangeable over the i.i.d.\ tokens and, when
$\lambda=0$, the i.i.d.\ initialization draws; no untrained position, the
target included, is systematically favored, so out-of-range accuracy is at
chance.
\end{proof}
On the full-sequence shift task this is visible per token: APE predicts
correctly at positions whose query and key both lie below $L_{\max}$ and at
chance elsewhere, so per-token accuracy tracks the in-range fraction
$\approx L_{\max}/L$ (Section~\ref{app:details}), which is why the two-layer
model falls to $0.37$ rather than to chance.

\section{The Linear-Attention Control}
\label{app:linear}
To test whether softmax normalization is essential, we replace
$\mathrm{softmax}(\ell)_t$ with the raw score $\ell_t/L$ (un-normalized linear
attention) and train the otherwise identical one-head RoPE model on
fixed-offset retrieval ($K{=}2$). Table~\ref{tab:linear} reports accuracy,
the key carrier fraction $\|\mathrm{mean}\|^2/\E\|\mathrm{proj}\|^2$, and the
carrier alignment $\cos(\mQ,R_{-K}\mK)$, for softmax and linear attention at
three widths.

\begin{table}[t]\centering
\caption{Softmax vs.\ linear attention on fixed-offset retrieval ($K{=}2$).
ID $=$ in-distribution accuracy ($L{=}16$); OOD at $L{=}32,48$. Softmax builds
the target-aligned carrier and extrapolates; linear attention solves ID by
min-norm interpolation, has no carrier ($\cos\approx0$), and collapses to
near chance (chance $0.05$) out of distribution. With few frequencies ($d{=}16$) linear
attention cannot even fit ID. ``frac'' is the key carrier fraction and
``align.''\ is $\cos(\mQ,R_{-K}\mK)$.}
\label{tab:linear}
\small
\setlength{\tabcolsep}{3pt}
\begin{tabular}{@{}llccccc@{}}
\toprule
attn & $d$ & ID & OOD$_{32}$ & OOD$_{48}$ & frac & align.\\
\midrule
softmax & 16 & $1.00$ & $0.64$ & $0.41$ & $0.79$ & $0.83$\\
softmax & 64 & $1.00$ & $1.00$ & $1.00$ & $0.80$ & $0.83$\\
softmax & 128& $1.00$ & $1.00$ & $1.00$ & $0.81$ & $0.92$\\
linear  & 16 & $0.56$ & $0.30$ & $0.24$ & $0.04$ & $-0.76$\\
linear  & 64 & $0.96$ & $0.09$ & $0.07$ & $0.07$ & $0.11$\\
linear  & 128& $0.96$ & $0.11$ & $0.08$ & $0.01$ & $-0.04$\\
\bottomrule
\end{tabular}
\end{table}

The failure has a clean identifiability reading. With RoPE the score is a
function of offset alone, $f(\delta)=\sum_j A_j\cos(\delta\theta_j+\psi_j)$,
and the training loss constrains $f$ only at the $|\Delta|$ in-distribution
offsets. For $d>|\Delta|$ the profile is underdetermined off the training
support, and the min-norm interpolant selected by training is generically
$\Theta(1)$ there; hence linear attention interpolates the training offsets
but diverges at unseen ones ($d{=}64,128$ in Table~\ref{tab:linear}), while
with too few frequencies it cannot localize the target at all ($d{=}16$). Softmax
removes this freedom: normalization couples all positions, so the target's
weight is governed by the length-invariant margin $\mu$ (Corollary~2) at every
$L$, and the carrier emerges. The carrier alignment separates the two regimes
sharply: $0.83$--$0.92$ under softmax versus $|\!\cos|\le0.11$ under linear
attention at $d\ge64$.

\section{Robustness and Mechanism Sweeps}
\label{app:sweeps}

\paragraph{Carrier fraction and decay law.} Table~\ref{tab:sweep} varies width,
vocabulary, training-length range, and softmax temperature about the reference
setup. The key carrier fraction stays in $0.72$--$0.86$ throughout, and the
inverse-linear decay fit $1/a_\star(L)=\alpha+c(L-1)$ holds with intercept
$\alpha\approx1$. The dilution rate $c$ falls with width, because wider models
carry more rotary frequencies and a sharper kernel; the one low $R^2$
($0.72$) is the wide training range $[8,32]$, where in-distribution lengths
already sample much of the evaluation range and the fit has little dynamic
range.

\begin{table}[t]\centering
\caption{Carrier fraction and decay-law fit across configurations (RoPE,
$K{=}2$, one seed per row). Baseline carrier fraction $1/d$.}
\label{tab:sweep}
\small
\setlength{\tabcolsep}{4pt}
\begin{tabular}{@{}llcccc@{}}
\toprule
knob & value & carr.\ frac & $\alpha$ & $c$ & $R^2$\\
\midrule
width $d$ & $32$ & $0.84$ & $0.80$ & $0.0171$ & $0.82$\\
          & $64$ & $0.80$ & $1.09$ & $0.0070$ & $0.93$\\
          & $128$& $0.81$ & $1.27$ & $0.0004$ & $0.91$\\
vocab $m$ & $10$ & $0.81$ & $1.14$ & $0.0060$ & $0.93$\\
          & $20$ & $0.80$ & $1.09$ & $0.0070$ & $0.93$\\
          & $40$ & $0.82$ & $1.06$ & $0.0072$ & $0.94$\\
range     & $[8,16]$ & $0.80$ & $1.09$ & $0.0070$ & $0.93$\\
          & $[4,12]$ & $0.78$ & $1.01$ & $0.0135$ & $0.94$\\
          & $[8,32]$ & $0.82$ & $1.15$ & $0.0021$ & $0.72$\\
temp $\tau$ & $0.5$ & $0.86$ & $1.15$ & $0.0085$ & $0.93$\\
          & $1.0$ & $0.80$ & $1.09$ & $0.0070$ & $0.93$\\
          & $2.0$ & $0.73$ & $1.01$ & $0.0075$ & $0.94$\\
\bottomrule
\end{tabular}
\end{table}

\paragraph{Rotary base and kernel incommensurability.}
Table~\ref{tab:base} sweeps the rotary base. The kernel
$g(s)=\sum_j\cos(s\theta_j)$ has a unique integer maximum at $s{=}0$; the
largest competing value at $s\ge4$ stays at $0.67$--$0.79$ of $g(0)$, so the
uniqueness condition of Lemma~2 holds empirically. OOD accuracy is
$1.00/1.00/0.97$ for base $\in\{10^3,10^4,10^5\}$ with selection at $-K$
throughout; larger bases add low frequencies that broaden the kernel,
explaining the mild degradation at $10^5$.

\begin{table}[t]\centering
\caption{Rotary base sweep ($K{=}2$). $g(1)/g(0)$ measures local kernel
breadth; $\max_{s\ge4}g(s)/g(0)$ measures far re-alignment. OOD accuracy at
$L{=}48$; selected offset at $L{=}32$.}
\label{tab:base}
\small
\setlength{\tabcolsep}{5pt}
\begin{tabular}{@{}lcccc@{}}
\toprule
base & $g(1)/g(0)$ & $\max_{s\ge4}g(s)/g(0)$ & OOD$_{48}$ & sel.\\
\midrule
$10^3$ & $0.96$ & $0.67$ & $1.00$ & $-K$\\
$10^4$ & $0.97$ & $0.75$ & $1.00$ & $-K$\\
$10^5$ & $0.97$ & $0.79$ & $0.97$ & $-K$\\
\bottomrule
\end{tabular}
\end{table}

\paragraph{APE regularization, pinning, and inference-time fixes.}
Table~\ref{tab:ape} confirms Proposition~2. Out-of-range embeddings stay at
initialization for $\lambda=0$ ($\|p_{\mathrm{out}}\|\approx\|p_{\mathrm{in}}\|$)
and vanish for $\lambda>0$; OOD accuracy stays near chance throughout; the pin
tightens toward $L_{\max}-1-K=13$ under ridge and is lower and more variable
without it. Table-interpolation, clamp-to-last, and index-rescaling all leave
OOD accuracy at chance ($0.05$--$0.06$), because rescaling distorts the very
offset structure the task requires.

\begin{table}[t]\centering
\caption{APE weight decay, pinning, and out-of-range norms ($K{=}2$, $3$
seeds; $L_{\max}{=}16$, so $L_{\max}{-}1{-}K=13$). Chance $0.05$.}
\label{tab:ape}
\small
\setlength{\tabcolsep}{4pt}
\begin{tabular}{@{}lccccc@{}}
\toprule
$\lambda$ & OOD acc & pin & $\|p_{\mathrm{in}}\|$ & $\|p_{\mathrm{out}}\|$ & interp.\ acc\\
\midrule
$0$      & $0.08$ & $7.3\pm1.7$  & $8.18$ & $8.08$ & $0.05$\\
$10^{-4}$& $0.10$ & $11.7\pm1.2$ & $3.83$ & $0.20$ & $0.05$\\
$10^{-2}$& $0.09$ & $12.7\pm0.5$ & $0.66$ & $0.20$ & $0.06$\\
\bottomrule
\end{tabular}
\end{table}

\paragraph{Measured-margin bound.} Table~\ref{tab:margin} evaluates the lower
bound of Corollary~2 with the empirically measured margin $\mu=2.30$. The
bound is valid at every length (observed $\ge$ bound) but loose, since it
retains only the single nearest competitor.

\begin{table}[t]\centering
\caption{Measured-margin lower bound $1/(1+(L-1)e^{-\mu})$, $\mu=2.30$, vs.\
observed peak attention (RoPE, $K{=}2$).}
\label{tab:margin}
\small
\setlength{\tabcolsep}{4.5pt}
\begin{tabular}{@{}lccccccc@{}}
\toprule
$L$ & 12 & 16 & 20 & 24 & 32 & 40 & 48\\
\midrule
bound    & $.48$ & $.40$ & $.35$ & $.30$ & $.24$ & $.20$ & $.18$\\
observed & $.83$ & $.83$ & $.83$ & $.82$ & $.76$ & $.74$ & $.69$\\
\bottomrule
\end{tabular}
\end{table}

\paragraph{Training dynamics.} Logged over training, the key carrier fraction
rises from $0.04$ at initialization to $0.74$ by step $200$ and plateaus near
$0.80$; the top-1 singular-value energy of $W=\Wq^\top\Wk$ rises from $0.10$
to $0.50$ over the same window and holds. The carrier and the rank
concentration therefore co-emerge within a few hundred steps, consistent with
the low-rank attention-as-SVM prediction.

\section{Multi-Head and Content-Conditioned Selection}
\label{app:multihead}

\paragraph{Multi-head capacity.} Splitting a fixed width across more heads
gives each head fewer rotary frequencies and a broader kernel, so dilution
\emph{increases} with head count. Table~\ref{tab:heads} reports per-head
top-1 singular-value energy of $W$ and peak attention by length for $H{=}1,4$;
peak attention at $L{=}48$ falls from $0.69$ to $0.30$. Each head's $W$ is
dominated by, but not equal to, a single direction (top-1 energy
$0.39$--$0.57$), tempering a strict rank-one reading.

\begin{table}[t]\centering
\caption{Per-head rank and dilution ($K{=}2$). Peak attention is averaged over
batch and heads.}
\label{tab:heads}
\small
\setlength{\tabcolsep}{4pt}
\begin{tabular}{@{}lcccccc@{}}
\toprule
& top-1 energy & \multicolumn{5}{c}{peak attention at $L=$}\\
\cmidrule(l){3-7}
$H$ & (per head) & 12 & 16 & 24 & 32 & 48\\
\midrule
$1$ & $0.50$ & $0.83$ & $0.83$ & $0.82$ & $0.76$ & $0.69$\\
$4$ & $0.39$--$0.57$ & $0.67$ & $0.63$ & $0.44$ & $0.42$ & $0.30$\\
\bottomrule
\end{tabular}
\end{table}

\paragraph{Content-conditioned retrieval.} A mode token at the query position
selects offset $K_0{=}1$ or $K_1{=}3$. A single head ($H{=}1$) solves the task
in distribution ($1.00$ accuracy); its mode-conditioned carrier splits as
predicted, peaking at $0$ for mode $0$ (the $K{=}1$ adjacency) and at $-3$ for
mode $1$. With $H{=}2$ the model still reaches $1.00$ in distribution, but the
mode-$K_1$ selection becomes distributed and is no longer attributable to
either head's mean carrier.

\paragraph{Larger offsets and other relative schemes.} The carrier-only
profile peaks at exactly $-K$ for $K\in\{2,3,4,5\}$. On the fixed-offset task,
ALiBi holds accuracy $\approx0.97$ across lengths at $K{=}1$ but fails at
$K{=}2$ (accuracy $0.11$--$0.13$ against chance $0.05$, selecting $-1$); a
T5-style learned bias selects any $K$ but decays faster ($1.00\to0.44$); RoPE
selects any $K$ and decays slowly ($K{=}2$ stays at $1.00$).

\section{Experimental Details and Reproducibility}
\label{app:details}

\paragraph{Model and task.} One layer, one head, $d{=}64$, vocabulary
$m{=}20$, no MLP or layer normalization; the query is the last position;
softmax attention; linear readout $U\in\R^{m\times d}$. RoPE uses base $10^4$;
APE uses a learned position table of size $\max L'{+}1$. The fixed-offset task
draws length-$L$ sequences of i.i.d.\ uniform tokens and labels each with the
token $K$ positions before the query. The two-layer model
(Section~5.2, main) is a decoder-only transformer with two layers, four heads,
$d{=}192$, an MLP of width $4d$ with GELU, LayerNorm, and a causal mask,
trained on the full-sequence shift $\mathrm{out}[i]=\mathrm{in}[i{-}K]$ with
$K{=}3$.

\paragraph{Training.} Adam at learning rate $10^{-3}$, batch $256$, $4000$
steps, cross-entropy loss (one-layer); $3\times10^{-4}$, batch $256$, $8000$
steps (two-layer). Training lengths are uniform in $[8,16]$. The one-layer
experiments evaluate lengths $\{12,16,20,24,32,40,48\}$, with accuracy over
$4096$ sequences and attention probes averaged over $4096$ sequences; peak
height is the maximum over positions of the mean attention. The two-layer
model is evaluated at $\{12,16,24,32,48\}$, with accuracy over $512$
sequences and per-head attention profiles over $256$. We use $8$ seeds for
the phenomenon and $5$ for the validation study; sweep rows use one seed.

\paragraph{Metrics.} (i) accuracy; (ii) selected peak offset (argmax of mean
attention relative to the query); (iii) carrier fraction
$\|\mathrm{mean}\|^2/\E\|\mathrm{proj}\|^2$ of projected keys and queries;
(iv) carrier-only profile $\phi_c(\delta)=\mQ^\top R_\delta\mK$; (v) decay
fit, the linear least squares of $1/a_\star$ on $L-1$.

\paragraph{Compute and code.} All experiments run on CPU; no GPU is required.
Determinism is controlled by a
single seed read immediately before each model is built. The released code is
one script with one subcommand per experiment. The main-paper figures are
produced by \texttt{multiseed} (Fig.~1), \texttt{shift} (Fig.~2),
\texttt{prop1} (Fig.~3), \texttt{mechanics} (Fig.~4), \texttt{variants}
(Fig.~5), and \texttt{transformer} (Fig.~6); \texttt{linear} produces
Table~\ref{tab:linear} of this appendix, and \texttt{mechanics},
\texttt{frequency}, and \texttt{variants} print the remaining sweep tables.

\end{document}